\documentclass[11pt]{article}

\usepackage{acl}

\usepackage{times}
\usepackage{latexsym}
\usepackage[T1]{fontenc}
\usepackage[utf8]{inputenc}
\usepackage{microtype}
\usepackage{amsmath}
\usepackage{amssymb}
\usepackage{amsthm}
\usepackage{booktabs}
\usepackage{multirow}
\usepackage{array}
\usepackage{graphicx}
\usepackage{float}
\usepackage{placeins}
\usepackage{tikz}
\usetikzlibrary{arrows.meta,backgrounds,calc,fit,positioning}

\newcommand{\method}{GRASP}

\newcommand{\papertitle}{Geometry-aware Residual Alignment for Scalable Pretraining Data Attribution}
\newcommand{\dataset}{\mathcal{D}}

\newcommand{\utility}{\mathcal{U}}

\newcolumntype{L}[1]{>{\raggedright\arraybackslash}p{#1}}
\newcolumntype{C}[1]{>{\centering\arraybackslash}p{#1}}
\newtheorem{theorem}{Theorem}
\newtheorem{proposition}{Proposition}

\title{\method{}: \papertitle{}}

\author{
 \textbf{Yue Min\textsuperscript{1}},
 \textbf{Ruining Chen\textsuperscript{1,2}},
 \textbf{Yujun Li\textsuperscript{1}}
\\
\\
 \textsuperscript{1}Wizard Quant,
 \textsuperscript{2}University of Science and Technology of China
\\
 \small{
   \textbf{Correspondence:} \href{mailto:minyue@wizardquant.com}{minyue@wizardquant.com}, \href{mailto:liyujun@wizardquant.com}{liyujun@wizardquant.com}
 }
}

\begin{document}
\maketitle

\begin{abstract}
Scalable data attribution methods typically assign isolated utility scores to individual training examples. This prevalent additive assumption fundamentally fails to capture critical subset dynamics, including data redundancy and complementary coverage. In this work, we reframe attribution as subset-level counterfactual utility prediction and introduce \method{}, an interaction-aware surrogate. Grounded in a theoretical smoothness lower bound, \method{} explicitly models subset interactions through a quadratic geometric penalty. To achieve pretraining-scale efficiency without relying on hidden oracle tuning, we couple low-dimensional feature sketches with a strictly finite lower-confidence bound selection protocol. Extensive subset-retraining evaluations demonstrate that \method{} decisively outperforms existing scalable baselines. It more than doubles the task-level rank correlation for counterfactual subset fidelity while reducing upfront artifact construction costs by nearly an order of magnitude. Downstream diagnostics further show that this scoring mechanism transfers to language model curation and cross-domain vision selection, establishing a robust foundation for optimizing massive pretraining corpora.
\end{abstract}

\section{Introduction}

The capabilities and failure modes of large language models are fundamentally determined by their pretraining corpora \citep{raffel2020exploring,penedo2024fineweb}. Consequently, data curation has become the central driver of model performance. This process encompasses extraction, filtering, deduplication, and the construction and optimization of pretraining data mixtures \citep{min2026gem}. Yet, these structural decisions typically rely on coarse heuristics or prohibitively expensive retraining trials. Data attribution bridges this methodological gap by connecting model behavior back to the data pipeline. This connection enables targeted auditing, contamination analysis, and principled data selection under fixed compute budgets.

\begin{figure*}[t]
  \centering
  \includegraphics[width=\textwidth]{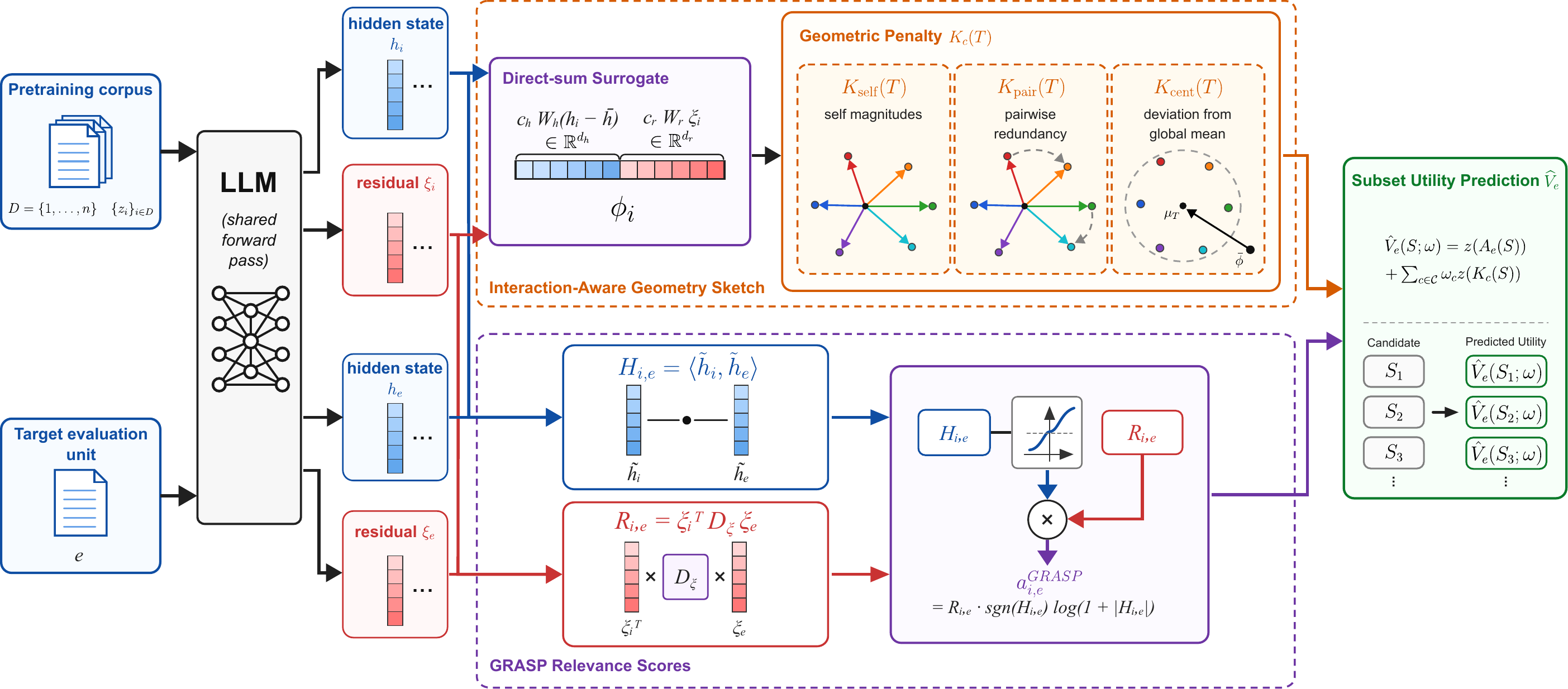}
  \caption{\textbf{Overview of the \method{} architecture.} \method{} predicts the counterfactual utility of data interventions by combining an additive relevance core (bottom) with an interaction-aware geometric penalty (top). By processing LLM hidden states and residuals into compact direct-sum sketches ($\phi_i$) and stabilized relevance scores, the method efficiently captures subset redundancy and complementary coverage. The standardized components are then linearly aggregated to evaluate entire subsets without combinatorial retraining.}
  \label{fig:grasp-pipeline}
\end{figure*}

Despite these promises, scaling attribution to guide practical curation reveals a fundamental flaw in current paradigms. Existing scalable methods, including influence functions \citep{koh2017understanding}, TracIn \citep{pruthi2020estimating}, TRAK \citep{park2023trak}, DataInf \citep{kwon2024datainf}, and LESS \citep{xia2024less}, operate by assigning a scalar relevance score to each individual training example. However, practical data curation involves retaining or removing coherent data slices rather than isolated sentences. When practitioners use pointwise scores to evaluate a data subset, they implicitly treat the utility of that subset as a simple sum of individual contributions. This additive assumption is mathematically restrictive. In real-world corpora containing near-duplicates and overlapping domains, training examples interact. Selecting highly aligned but identical examples yields diminishing returns, whereas diverse examples provide complementary coverage.

While coalition-based valuations such as Data Shapley \citep{ghorbani2019data} and datamodeling \citep{ilyas2022datamodels,wang2025data} formally capture these subset dependencies, exact coalition evaluation is computationally prohibitive for routine pretraining pipelines. This creates a critical methodological gap. Scalable attribution requires a reusable surrogate that explicitly models train-train interactions without requiring combinatorial retraining. Consequently, the evaluation of attribution methods must structurally shift from individual example ranking to \emph{subset-level counterfactual utility prediction}. A rigorous attribution method must accurately rank candidate data interventions by their empirical utility after retraining.

To address this challenge, we propose \method{}, a geometry-aware residual-alignment surrogate for scalable pretraining data attribution. Our approach is grounded in a one-step smoothness lower bound on utility improvement for weighted subset interventions. This theoretical foundation naturally yields a linear target relevance term alongside a quadratic geometric penalty. Expanding this quadratic expression formally captures the mutual dependencies omitted by additive scoring. For pretraining-scale deployment, \method{} replaces exact high-dimensional update directions with low-dimensional feature sketches. For experimental discipline, all component weights and retain/delete modes are selected on development environments before test evaluation, separating scalable subset prediction from hidden per-task tuning.

In summary, our main contributions are:
\begin{itemize}
  \setlength{\itemsep}{0.15em}
  \setlength{\parsep}{0pt}
  \setlength{\parskip}{0pt}
  \item We formalize pretraining data attribution as \emph{subset-level counterfactual utility prediction}, directly aligning the objective with practical corpus interventions to overcome the limitations of additive pointwise scoring.
  \item We derive \method{}, an interaction-aware surrogate grounded in a one-step smoothness lower bound. Its quadratic geometric penalty explicitly models data redundancy and complementary coverage without requiring combinatorial retraining.
  \item We instantiate \method{} with compact feature sketches for pretraining-scale efficiency. It reduces artifact construction time by nearly an order of magnitude, scores 100,000 subsets in 5 seconds, and more than doubles the task-level LDS rank correlation over existing scalable baselines.
\end{itemize}

\section{Problem Setup}

We study \emph{subset-level counterfactual utility prediction}: given a pretraining corpus, an evaluation target, and candidate data interventions, the objective is to rank the interventions by their realized utility after retraining.

\subsection{Pretraining Data Attribution}

Let $\dataset=\{1,\ldots,n\}$ index a pretraining corpus $\{z_i\}_{i\in\dataset}$. A data intervention retains a subset $S\subseteq\dataset$, yielding parameters $\theta_S$ after retraining. For a target $e$, we define the utility as $\utility_e(\theta)=-L_e(\theta)$, where $L_e$ is the target loss. The true subset utility is therefore
\begin{equation}
  \utility_e(S) = \utility_e(\theta_S) = -L_e(\theta_S).
  \label{eq:true-subset-utility}
\end{equation}
Pretraining attribution aims to provide a scalable surrogate $\widehat{\utility}_e:2^{\dataset}\to\mathbb{R}$ for this set function $S\mapsto\utility_e(S)$.

While traditional attribution assigns a scalar score $a_{i,e}\in\mathbb{R}$ to each example $i$ \citep{koh2017understanding,park2023trak,xia2024less}, a subset-level surrogate predicts the utility of the entire intervention $S$.

\subsection{Counterfactual Fidelity via LDS}

Following datamodeling \citep{ilyas2022datamodels,park2023trak}, we evaluate counterfactual fidelity using the Linear Datamodeling Score (LDS). Given a family of $B$ sampled interventions $\mathcal{B}=\{S_b\}_{b=1}^{B}$, LDS measures the rank correlation between predicted and realized utilities for target $e$:
\begin{equation}
  \rho_e =
  \operatorname{Spearman}_{b}
  \left(
    \widehat{\utility}_{e}(S_b),
    \utility_{e}(S_b)
  \right).
  \label{eq:lds-spearman}
\end{equation}

When relying on pointwise scores, the subset predictor degrades to an additive baseline:
\begin{equation}
  \widehat{\utility}^{\mathrm{add}}_e(S)
  = \sum_{i\in S} a_{i,e}.
  \label{eq:additive-utility}
\end{equation}
This modular form assumes independence among examples, rendering it brittle for corpora containing duplicates and domain clusters. We therefore seek a scalable surrogate incorporating a structural subset interaction term to improve $\rho_e$.

\section{Method}
\label{sec:method}

\method{} predicts the utility of a candidate data intervention without
retraining for every query. It first computes residual-alignment relevance
scores for train--target pairs, then scores a whole subset by adding
geometry-aware train-train interactions, and finally selects a fixed protocol on
development environments.

Figure~\ref{fig:grasp-pipeline} summarizes this pipeline.

\subsection{GRASP Relevance Scores}
\label{subsec:grasp-relevance}

To evaluate a data intervention, \method{} first requires a scalable measure of alignment between training and target examples. Let $\ell_i(\theta)$ be the average causal-LM training loss for sequence $i$, and let $H$ denote the Hessian or generalized Gauss--Newton (GGN) matrix of the weighted training objective. Exact influence-style alignment would use the curvature-preconditioned gradient $H^{-1}\nabla_\theta\ell_i(\theta)$, which is intractable at pretraining scale. We instead derive a proxy from last-layer causal-LM geometry.

At a supervised next-token position $t$, let $h_{i,t}$ be the final-layer hidden state, $p_{i,t}$ the predicted vocabulary distribution, and $y_{i,t+1}$ the ground-truth next-token ID. The residual
\begin{equation}
  \xi_{i,t}=p_{i,t}-\operatorname{onehot}(y_{i,t+1})
  \label{eq:token-residual}
\end{equation}
gives the exact output-layer gradient $\nabla_W\ell_{\mathrm{tok}}(i,t)=h_{i,t}\xi_{i,t}^{\top}$, up to matrix orientation. Let $P_i$ contain the last at most 32 valid supervised positions. Our default tail-mean-32 summaries are
\begin{equation}
  h_i=\frac{1}{|P_i|}\sum_{t\in P_i}h_{i,t},
  \qquad
  \xi_i=\frac{1}{|P_i|}\sum_{t\in P_i}\xi_{i,t}.
  \label{eq:sequence-pooling}
\end{equation}
For memory efficiency, each token residual retains its top-64 predicted-token coordinates together with the true-label coordinate before sparse accumulation and averaging. The sequence-level rank-1 surrogate $\nabla_W\ell_i\approx h_i\xi_i^\top$ drops within-sequence hidden--residual covariance but retains a compact alignment summary. Target summaries $h_e$ and $\xi_e$ are constructed analogously.

Under a Kronecker-factored generalized Gauss--Newton (GGN) approximation \citep{martens2015optimizing}, the inverse empirical Fisher or GGN matrix is approximated as a Kronecker product $F^{-1} \approx \Sigma_h^{-1} \otimes D_{\xi}$. Consequently, the inner product of preconditioned gradients elegantly factorizes into separate hidden and residual matches:
\begin{equation}
  \langle \nabla_W \ell_i, F^{-1} \nabla_W \ell_e \rangle 
  \approx 
  \left( h_i^\top \Sigma_h^{-1} h_e \right) \left( \xi_i^\top D_{\xi} \xi_e \right).
  \label{eq:kfac-inner-product}
\end{equation}
Motivated by this decomposition, for a target evaluation unit $e$ with representations $h_e,\xi_e$, we whiten the hidden states using global training statistics: 
\begin{equation}
  \tilde h_i=\Sigma_h^{-1/2}(h_i-\bar h),
  \qquad
  \tilde h_e=\Sigma_h^{-1/2}(h_e-\bar h),
  \label{eq:grasp-hidden-whiten}
\end{equation}
where $\Sigma_h^{-1/2}$ incorporates Tikhonov regularization. The factorized similarities are then defined as:
\begin{align}
  H_{i,e} &= \langle \tilde h_i,\tilde h_e\rangle,\\
  R_{i,e} &= \xi_i^\top D_{\xi}\xi_e .
  \label{eq:grasp-components}
\end{align}
Here, $D_{\xi} = \mathrm{diag}(\gamma+\varepsilon)^{-\alpha}$ ($\alpha \in \{1, 1/2\}$) is a stabilized diagonal inverse-covariance preconditioner: $\gamma$ contains empirical residual second moments and $\varepsilon>0$ prevents division by zero. We estimate $\gamma$ from the same sparse top-$k$ residual representation to minimize memory.

Strict adherence to local influence dictates a direct bilinear product $H_{i,e} R_{i,e}$. In practice, however, hidden-state inner products exhibit extreme heavy-tailed distributions that routinely destabilize subset evaluation. To robustify the formulation, we explicitly depart from the exact Taylor expansion and introduce a monotonic dampening function:
\begin{equation}
  a^{\mathrm{GRASP}}_{i,e}
  =
  R_{i,e}\,
  \operatorname{sgn}(H_{i,e})\log(1+|H_{i,e}|).
  \label{eq:grasp-score}
\end{equation}
This odd, strictly increasing log-transformation preserves both the ordinal ranking and the directional support (sign) of the representation match, while safely truncating the disproportionate leverage of geometric outliers. 

Finally, the first-order relevance of a weighted intervention subset $T$ is computed additively:
\begin{equation}
  A^{\mathrm{GRASP}}_e(T)
  =
  \sum_{i\in T} w_i a^{\mathrm{GRASP}}_{i,e}.
  \label{eq:grasp-subset-relevance}
\end{equation}
\method{} adopts $A^{\mathrm{GRASP}}_e(T)$ as its foundational linear predictor, which is subsequently augmented with geometry-aware interaction penalties to model subset redundancy.

\subsection{Interaction-Aware Subset Utility}
\label{subsec:interaction-aware}

Let $g_e=\nabla_\theta L_e(\theta)$ be the target-loss gradient at a reference checkpoint. Using the training-objective curvature $H$ defined above, implicit differentiation yields the influence-directed update $u_i=H^{-1}\nabla_\theta\ell_i(\theta)$ for example $i$, up to a shared scale (Appendix~\ref{app:theory}). A standard scalar attribution method scores a subset $S$ additively via this first-order alignment:
\begin{equation}
  \widehat{\utility}^{\mathrm{add}}_e(S)
  =
  \sum_{i\in S} a_{i,e},
  \qquad
  a_{i,e}\approx \langle g_e,u_i\rangle .
  \label{eq:method-additive}
\end{equation}
However, this modular form assumes strict independence, fundamentally failing to capture redundancy or complementarity within a subset. To formally model subset dynamics, we must examine the geometry of the combined update, which naturally incurs a curvature penalty.

\begin{theorem}[One-step interaction lower bound]
\label{thm:smoothness-bound}
Fix parameters $\theta$ and target $e$. Let $g_e=\nabla_\theta L_e(\theta)$, step size $\eta\ge 0$, and update direction $u_i\in\mathbb{R}^{\dim(\theta)}$. For a weighted set $T$ with weights $w_i$, define the combined update $D_T=\sum_{i\in T}w_i u_i$. If $L_e$ is $\beta_e$-smooth (with respect to the Euclidean norm) between $\theta$ and $\theta-\eta D_T$, the target utility improvement $\Delta_e(T)=\utility_e(\theta-\eta D_T)-\utility_e(\theta)$ satisfies:
\begin{equation}
  \begin{aligned}
    \Delta_e(T)
    &\ge
    \eta \sum_{i\in T}w_i\langle g_e,u_i\rangle \\
    &\quad -
    \frac{\beta_e\eta^2}{2}
    \left\|\sum_{i\in T}w_i u_i\right\|^2 .
  \end{aligned}
  \label{eq:smoothness-lower-bound}
\end{equation}
\end{theorem}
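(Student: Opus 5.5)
The plan is to apply the standard descent lemma for $\beta_e$-smooth functions along the segment from $\theta$ to $\theta-\eta D_T$. Write $\theta' = \theta - \eta D_T$ and $\gamma(s)=\theta - s\eta D_T$ for $s\in[0,1]$. By the fundamental theorem of calculus,
\begin{equation*}
  L_e(\theta') - L_e(\theta) = \int_0^1 \langle \nabla L_e(\gamma(s)), -\eta D_T\rangle\, ds .
\end{equation*}
The integrand splits as $\langle g_e, -\eta D_T\rangle$ plus the correction $\langle \nabla L_e(\gamma(s)) - g_e, -\eta D_T\rangle$.

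The correction is controlled by Cauchy--Schwarz together with the smoothness hypothesis on the segment. We have $\|\nabla L_e(\gamma(s)) - g_e\| \le \beta_e \|\gamma(s)-\theta\| = \beta_e s\eta\|D_T\|$, so the correction term is at most $\beta_e s \eta^2 \|D_T\|^2$. Integrating over $s\in[0,1]$ gives
\begin{equation*}
  L_e(\theta') \le L_e(\theta) - \eta\langle g_e, D_T\rangle + \tfrac{\beta_e\eta^2}{2}\|D_T\|^2 .
\end{equation*}

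Next, negate the inequality, using $\utility_e=-L_e$, to obtain $\Delta_e(T) \ge \eta\langle g_e,D_T\rangle - \frac{\beta_e\eta^2}{2}\|D_T\|^2$. Bilinearity of the inner product then gives $\langle g_e, D_T\rangle = \sum_{i\in T} w_i\langle g_e,u_i\rangle$. Substituting $D_T=\sum_{i\in T} w_i u_i$ into the norm yields exactly \eqref{eq:smoothness-lower-bound}. The case $\eta=0$ is trivial, with both sides equal to zero.

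The only point needing care is the interpretation of ``$\beta_e$-smooth between $\theta$ and $\theta-\eta D_T$''. I will read it as Lipschitz continuity of $\nabla L_e$ with constant $\beta_e$ on the closed segment joining the two points, together with differentiability there so the path integral is valid. This is all the argument uses, since every comparison point $\gamma(s)$ lies on that segment. If instead only a local Hessian bound $\|\nabla^2 L_e\|\le\beta_e$ on the segment is assumed, I would replace the Lipschitz step with the second-order Taylor remainder in integral form. That gives the same $\frac{\beta_e\eta^2}{2}\|D_T\|^2$ term. Beyond this, the argument is routine, and the expansion $\|D_T\|^2=\sum_{i,j} w_iw_j\langle u_i,u_j\rangle$ shows where the pairwise interaction term comes from.
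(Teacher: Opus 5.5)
Your proof is correct and follows essentially the paper's route. The paper likewise applies the quadratic upper bound $L_e(\theta+\Delta)\le L_e(\theta)+\langle\nabla_\theta L_e(\theta),\Delta\rangle+\frac{\beta_e}{2}\|\Delta\|^2$ with $\Delta=-\eta D_T$, negates via $\utility_e=-L_e$, and expands $D_T$; the one difference is that it takes this inequality as the definition of $\beta_e$-smoothness, whereas you derive it from a Lipschitz-gradient condition on the segment via the fundamental theorem of calculus, which matches the theorem's ``between $\theta$ and $\theta-\eta D_T$'' hypothesis more precisely.
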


Equation~\ref{eq:smoothness-lower-bound} is a tractable one-step instance of a more general endpoint smoothness characterization. For completed retraining, the same lower-bound structure holds with the realized endpoint displacement; replacing that displacement by $\eta D_T$ introduces an explicit trajectory-mismatch term (Proposition~\ref{prop:endpoint-smoothness}). Expanding the local quadratic penalty isolates the pairwise interaction:
\begin{equation}
  \left\|\sum_{i\in T}w_i u_i\right\|^2
  =
  \sum_{i\in T}w_i^2\|u_i\|^2
  +
  2\sum_{\substack{i<j\\i,j\in T}}
  w_iw_j\langle u_i,u_j\rangle .
  \label{eq:quadratic-expansion}
\end{equation}
This reveals that mutually aligned examples ($\langle u_i,u_j\rangle > 0$) incur a strictly positive off-diagonal penalty. In terms of discrete curvature, the second difference of the set function satisfies $\Delta_i\Delta_j \widehat{\utility}^{\mathrm{int}}_e \propto - w_iw_j\langle u_i,u_j\rangle$. Thus, aligned directions exhibit diminishing marginal returns, while structurally opposing directions can be complementary (see Proposition~\ref{prop:aligned-submodularity}).

While Theorem~\ref{thm:smoothness-bound} dictates the necessity of the quadratic penalty, calculating the exact coefficient $\beta_e\eta^2/2$ is intractable for pretraining-scale interventions, as $\beta_e$ and $\eta$ fluctuate with subset norm and model scale. We therefore treat the theorem as a structural blueprint: a linear relevance term regularized by a non-negative geometric penalty. By replacing the intractable scalar with a tunable protocol parameter $\lambda_e \ge 0$, we arrive at the idealized interaction-aware surrogate:
\begin{equation}
  \widehat{\utility}^{\mathrm{int}}_e(T)
  =
  \eta A^{u}_e(T) - \lambda_e K^{u}(T),
  \label{eq:ideal-interaction-surrogate}
\end{equation}
where $A^{u}_e(T)=\sum_{i\in T}w_i\langle g_e,u_i\rangle$ captures the additive utility, and $K^{u}(T)=\|D_T\|^2$. In the implemented predictor, $\lambda_e$ is absorbed into a finite set of component weights calibrated on development environments.

\begin{table*}[t]
  \centering
  \setlength{\tabcolsep}{3.8pt}
  \begin{tabular*}{\textwidth}{@{\extracolsep{\fill}}l*{4}{cc}@{}}
    \toprule
    & \multicolumn{2}{c}{BM25}
    & \multicolumn{2}{c}{TracIn}
    & \multicolumn{2}{c}{TRAK}
    & \multicolumn{2}{c}{\method{}} \\
    \cmidrule(lr){2-3}\cmidrule(lr){4-5}\cmidrule(lr){6-7}\cmidrule(lr){8-9}
    Benchmark & $\rho$ & Pos. & $\rho$ & Pos. & $\rho$ & Pos. & $\rho$ & Pos. \\
    \midrule
    BasicSkills Common Knowledge & 0.062 & 0.520 & \underline{0.114} & \underline{0.656} & 0.059 & 0.551 & \textbf{0.312} & \textbf{0.973} \\
    BasicSkills Logical Reasoning & 0.029 & 0.500 & \underline{0.220} & \underline{0.703} & 0.167 & 0.672 & \textbf{0.457} & \textbf{0.980} \\
    OpenBookQA & 0.064 & 0.508 & 0.151 & 0.578 & \underline{0.182} & \underline{0.582} & \textbf{0.350} & \textbf{0.855} \\
    CommonsenseQA & -0.089 & 0.510 & \underline{0.163} & \underline{0.680} & -0.062 & 0.495 & \textbf{0.260} & \textbf{0.870} \\
    SciQ & 0.187 & 0.609 & 0.223 & 0.609 & \underline{0.242} & \underline{0.641} & \textbf{0.424} & \textbf{0.906} \\
    \midrule
    Average & 0.051 & 0.529 & \underline{0.174} & \underline{0.645} & 0.118 & 0.588 & \textbf{0.361} & \textbf{0.917} \\
    \bottomrule
  \end{tabular*}
  \caption{\textbf{Counterfactual subset-utility fidelity on full-dataset benchmarks.} \method{} consistently outperforms all baselines in predicting empirical retraining outcomes. We report the task-level Spearman rank correlation ($\rho$) and the fraction of evaluation units exhibiting positive LDS correlation (Pos.). Higher is better for both metrics. Best and second-best results are \textbf{bolded} and \underline{underlined}, respectively.}
  \label{tab:main-lds}
\end{table*}

\subsection{Sketched Subset Predictor}
\label{subsec:sketched-predictor}

Directly storing high-dimensional update directions $u_i \in \mathbb{R}^{\dim(\theta)}$ for every pretraining row is computationally intractable. Under the Kronecker-factored view, the true geometric interaction $\langle u_i,u_j \rangle$ takes the form of a product kernel between hidden-state and residual similarities. However, explicitly constructing the corresponding tensor-product feature $\psi_i=(W_h(h_i-\bar h))\otimes(W_r\xi_i)$ incurs a prohibitive $O(d_h d_r)$ memory footprint. To achieve linear scaling, \method{} deliberately trades exact multiplicative cross-terms for an additive direct-sum surrogate via feature concatenation:
\begin{align}
  \phi_i &= [c_h W_h(h_i-\bar h);\; c_r W_r \xi_i] \in \mathbb{R}^{d_h+d_r}.
  \label{eq:feature-sketch}
\end{align}
Here, $W_h$ is the centered whitening map from Equation~\ref{eq:grasp-hidden-whiten}, $W_r$ is the residual scaling induced by the stabilized preconditioner $D_\xi$ (or its sparse top-$k$ approximation), and $c_h,c_r$ are fixed scaling constants. While this direct-sum kernel intentionally deviates from an unbiased product kernel, it ensures fast $O(|T|d)$ subset sweeps and linear storage space. Appendix~\ref{app:theory} formalizes the local approximation bound and resulting sketch error of this relaxation.

Using this compact sketch, we operationalize the blueprint of Theorem~\ref{thm:smoothness-bound}. The scalar relevance $A_e(T) = A^{\mathrm{GRASP}}_e(T)$ is retrieved separately. For any candidate intervention set $T$, the raw geometric penalty is $K_{\mathrm{raw}}(T) = \|\sum_{i\in T}w_i\phi_i\|^2$. To grant the predictor finer expressivity over self-regularization and cross-sample redundancy, we orthogonally decompose and center this geometry:
\begin{align}
  K_{\mathrm{self}}(T) &=
  \sum_{i\in T}w_i^2\|\phi_i\|^2,\\
  K_{\mathrm{pair}}(T) &=
  K_{\mathrm{raw}}(T)-K_{\mathrm{self}}(T),\\
  K_{\mathrm{cent}}(T) &=
  \left\|\sum_{i\in T}w_i(\phi_i-\bar{\phi})\right\|^2 ,
  \label{eq:ia-components}
\end{align}
where $\bar{\phi}=n^{-1}\sum_{i\in\dataset}\phi_i$ is the global feature mean. 

Finally, constructing a unified surrogate requires reconciling the relative scales of the linear relevance and the decomposed quadratic penalties. As dictated by Theorem~\ref{thm:smoothness-bound}, the true interaction scale depends on a target-specific and subset-dependent smoothness constant $\beta_e$, which cannot be analytically computed. For each component, let $\mu$ and $\sigma$ be its empirical mean and standard deviation on a fixed calibration subset family. These moments put relevance and geometry on comparable scales, and test interventions never determine their own normalization. We then use
\begin{equation}
  \widehat{V}_e(T;\omega)
  =
  z(A_e(T))
  +
  \sum_{c\in\mathcal{C}}\omega_c\,z(K_c(T)).
  \label{eq:component-predictor}
\end{equation}
Here, $\mathcal{C}$ contains the selected geometric components (omitting zero-variance ones), and $z(x)=(x-\mu)/\sigma$. By mapping all targets into a standardized scale space, \method{} allows a single, dataset-agnostic weight vector $\omega$ (calibrated strictly on development environments; see Appendix~\ref{app:implementation}) to implicitly govern the theoretical $\lambda_e$ interaction scale. Once scores and sketches are pre-computed, evaluating a subset costs $O(|T|d)$ time with only $O(nd)$ global storage.

\section{Experiments}
\label{sec:experiments}

The core premise of subset-level attribution is counterfactual fidelity. In this section, we first evaluate this primary capability using rigorous subset-retraining protocols (Section~\ref{sec:exp-subset-fidelity}). We then conduct ablation studies to isolate the mechanistic contributions of our interaction-aware components (Section~\ref{sec:exp-mechanism}). Finally, we demonstrate the operational utility of \method{} across downstream applications, including language model curation, cross-domain vision transfer, and semantic retrieval (Section~\ref{sec:exp-applications}). Setting-specific configurations for all experiments are detailed in Appendix~\ref{app:experimental-details}.

\subsection{Counterfactual Subset-Utility Evaluation}
\label{sec:exp-protocols}
\label{sec:exp-subset-fidelity}

Table~\ref{tab:main-lds} reports the Linear Datamodeling Score (LDS) evaluation, which directly tests counterfactual fidelity by measuring the rank correlation between predicted subset utilities and their empirical outcomes after retraining. To ensure a rigorous comparison, we strictly control the intervention candidate pool and the subset-retraining recipe across all methods. Our experimental testbed utilizes a 50M-parameter causal decoder-only Transformer \citep{vaswani2017attention,radford2019language} trained on a 10B-token pool. This pool is randomly sampled from a comprehensive Common Crawl corpus curated via standard scalable data pipelines \citep{raffel2020exploring,penedo2024refinedweb,penedo2024fineweb,li2024dclm}. The 50M-parameter setting is an evaluation-scale choice rather than an algorithmic limit: counterfactual LDS requires rebuilding every baseline and retraining every candidate subset under a matched protocol. Complete metric definitions and baseline details are provided in Appendix~\ref{app:experimental-details}.

The empirical results demonstrate that \method{} strictly dominates all baseline methods across both metrics on every evaluated benchmark. Notably, \method{} more than doubles the average task-level Spearman correlation compared to the strongest alternative, improving $\rho$ from 0.174 (TracIn) to 0.361. Furthermore, it substantially elevates the positive-LDS fraction from 0.645 to 0.917, indicating highly reliable directional predictions across diverse intervention samples. This substantial performance gap provides the central evidence that explicitly capturing train-train interactions yields a strictly superior surrogate for ranking concrete pretraining interventions.

We additionally measure retraining variability on 30 fixed SciQ subsets using three optimizer seeds while holding subset masks and attribution predictions fixed. \method{} retains a task-level $\rho$ of $0.414\pm0.046$, compared with $0.235\pm0.041$ for the strongest baseline in this audit (TRAK-5); Appendix Table~\ref{tab:multiseed-variance} gives all rows. This control indicates that the observed ranking gap is robust to the measured training-run variation.

\begin{figure*}[t]
  \centering
  \includegraphics[width=0.92\textwidth]{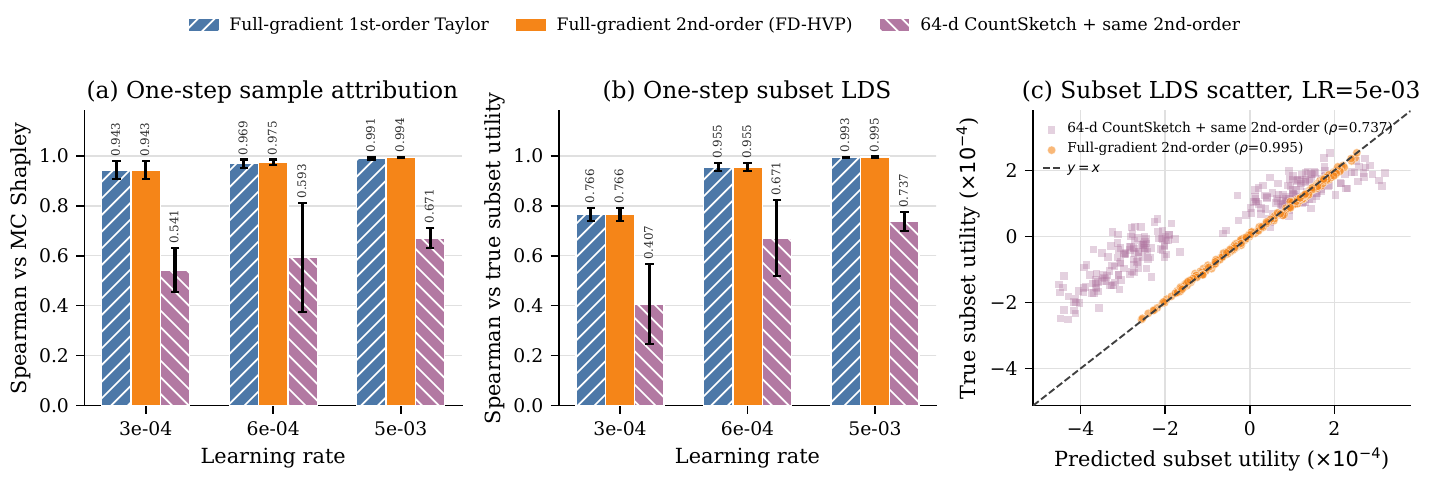}
  \caption{\textbf{One-step counterfactual fidelity on CC text.} Panels (a--b) report mean Spearman correlation over two initialization seeds and disjoint 16-example candidate batches; black whiskers are standard errors. The first-order full-gradient Taylor predictor (blue) and its finite-difference-HVP second-order extension (orange) are compared with a 64-dimensional CountSketch projected-gradient predictor given the same second-order correction (purple). Panel (a) uses Monte Carlo Shapley values from 1,000 permutations of a fixed 16-example game; panel (b) uses 128 fixed-size eight-example subsets. Panel (c) plots the two second-order predictions against realized utility at learning rate $5\times10^{-3}$; the dashed line is $y=x$.}
  \label{fig:one-step-grasp-fidelity}
\end{figure*}
\begin{table}[t]
  \centering
  \begin{tabular*}{\columnwidth}{@{\extracolsep{\fill}}lccc@{}}
    \toprule
    Method & Build (d) & State & Sweep (s) \\
    \midrule
    TracIn & 2.2 & 1 & 0.086 \\
    TRAK & 2.2 & 640 & 159.8 \\
    InRun-DS & 10.0 & 1 & 0.086 \\
    Ret.-DS & 12.2 & 0 & $1.05{\times}10^6$ \\
    \method{} & 0.27 & 384 & 5.08 \\
    \bottomrule
  \end{tabular*}
  \caption{\textbf{\method{} achieves efficient artifact construction and subset evaluation.} Build reports days, State reports stored representation floats per example for the evaluated configuration, and Sweep reports seconds for 100,000 candidate subsets.}
  \label{tab:subset-scoring-speed}
\end{table}
Practical pretraining attribution demands both cheap artifact construction and rapid subset evaluation. Table~\ref{tab:subset-scoring-speed} uncouples these costs, reporting upfront construction in days, stored representation floats per example, and cached 100k-subset sweeps in seconds. Representation counts include checkpoints or component sketches used by the evaluated configuration; deployed cache sizes, a storage-matched TRAK control, and replay throughput are reported in Appendix~\ref{app:baseline-controls}. While scalar methods like TracIn enable fast sweeps once artifacts exist, faithful per-evaluation InRun Data Shapley requires an expensive target-conditioned replay. In contrast, \method{} requires 0.27 days for artifact construction and executes a 100k-subset sweep in 5.08 seconds, approximately 31$\times$ faster than the TRAK cached aggregation benchmark.

\begin{figure}[t]
  \centering
  \includegraphics[width=\columnwidth]{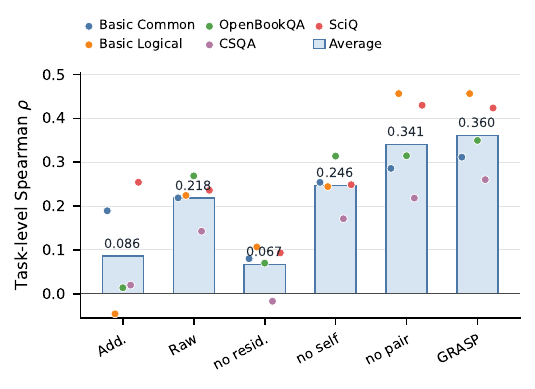}
  \caption{\textbf{Impact of interaction components on subset prediction fidelity.} We ablate \method{} components across all primary LDS benchmarks. Bars show mean task-level Spearman correlation ($\rho$), and colored points show individual tasks.}
  \label{fig:component-ablation}
\end{figure}

\subsection{Mechanistic Analysis and Ablations}
\label{sec:exp-mechanism}

To understand the source of these performance gains, we isolate the mechanistic contributions of individual \method{} components. Detailed one-step ground-truth construction and ablation protocols are given in Appendix~\ref{app:experimental-details}, with an additional cached protocol-selection diagnostic in Appendix Table~\ref{tab:cached-grasp-protocol-sweep}.

Figure~\ref{fig:one-step-grasp-fidelity} shows that the \method{} relevance core tracks both Monte Carlo Shapley values and one-step subset utilities. Furthermore, the interaction surrogate substantially improves the high-learning-rate regime where subset updates are less linear. A projected-gradient control with an identical interaction correction remains significantly weaker, indicating that the performance gain does not stem from adding a generic pairwise correction to an arbitrary score.

The long-horizon LDS results exhibit the same pattern on complete subset retraining interventions. Relative to an additive-only predictor, incorporating the fixed interaction components raises the macro-average task-level Spearman from 0.086 to 0.360 and pairwise intervention accuracy from 0.528 to 0.623. Figure~\ref{fig:component-ablation} further shows the dataset-level structure of this gain: residual alignment is the most critical component family, and removing it degrades every benchmark. Meanwhile, self terms provide a consistent secondary contribution, whereas hidden geometry exhibits more task-dependent behavior. Appendix Table~\ref{tab:nonadditive-controls} separately compares the additive self-norm, pair-only, and pair-shuffled controls to isolate strictly cross-example gains.

\subsection{Downstream Applications and Transferability}
\label{sec:exp-applications}

Beyond baseline counterfactual fidelity, we evaluate how the interaction-aware scoring principle performs in practical data pipelines and entirely distinct domains.

\paragraph{Fixed-compute language model curation.} 

To evaluate practical utility in pretraining pipelines, we deploy \method{} to select a high-quality training pool prior to model initialization. Table~\ref{tab:dclm-scaleup} reports completed held-out Common Crawl and SciQ evaluations under the same fixed-compute training schedule for a 1B-parameter model and matched 25B-token budget. \method{} achieves the lowest held-out CC loss and the best SciQ accuracy among the displayed full-budget diagnostic rows. This is application-scale evidence that the scoring artifacts can support a larger model, not a direct 1B LDS fidelity test. We report early training-log snapshots and compact cross-architecture cleaning diagnostics separately in Appendix Tables~\ref{tab:dclm-logged-loss-snapshot} and~\ref{tab:cc-cleaning-best-observed}.

\begin{table}[t]
  \centering
  \setlength{\tabcolsep}{2pt}
  \begin{tabular*}{\columnwidth}{@{\extracolsep{\fill}}lccc@{}}
    \toprule
    Method & CC loss $\downarrow$ & PPL $\downarrow$ & SciQ acc. $\uparrow$ \\
    \midrule
    Random & 2.9644 & 19.38 & \underline{0.3203} \\
    InRun-1 & 2.9632 & 19.36 & 0.3086 \\
    InRun-2 & \underline{2.9550} & \underline{19.20} & 0.3047 \\
    \method{} & \textbf{2.9421} & \textbf{18.96} & \textbf{0.3398} \\
    \bottomrule
  \end{tabular*}
  \caption{\textbf{Held-out evaluation for fixed-compute CC-10B curation.} All rows use the same model, optimizer, selected-pool size, and training budget.}
  \label{tab:dclm-scaleup}
\end{table}

\paragraph{Cross-domain transfer: Vision data selection.} 
\begin{table}[t]
  \centering
  \setlength{\tabcolsep}{0pt}
  \begin{tabular*}{\columnwidth}{@{\extracolsep{\fill}}lccccc@{}}
    \toprule
    Method & AUC & 20\% & 40\% & 60\% & 80\% \\
    \midrule
    Random & 53.92 & 50.9 & 63.2 & 70.0 & 74.5 \\
    KNN & 73.77 & 34.1 & 48.7 & 64.7 & 74.3 \\
    Infl. & 75.59 & 45.8 & 64.4 & \underline{72.2} & \textbf{76.9} \\
    TRAK & 55.38 & 29.9 & 43.4 & 52.8 & 65.0 \\
    DataM. & 54.34 & 47.6 & 64.1 & 71.4 & 74.1 \\
    InRun-1 & \underline{80.28} & \underline{55.6} & \underline{68.9} & 71.9 & 75.5 \\
    InRun-2 & 80.04 & 52.8 & 66.4 & \textbf{73.4} & 75.9 \\
    \method{} & \textbf{80.32} & \textbf{56.1} & \textbf{69.6} & 71.8 & \underline{76.1} \\
    \bottomrule
  \end{tabular*}
  \caption{\textbf{Cross-domain transferability on vision data selection.} AUC evaluates noisy-label detection; remaining columns report selected-data accuracy at each retention budget. KNN: KNN-Shapley, Infl.: Influence, DataM.: Datamodels. All entries are percentages; higher is better.}
  \label{tab:vision-sanity}
\end{table}
To verify whether this scoring principle generalizes beyond language modeling, we evaluate our formulation in the standard Data Shapley vision setting \citep{ghorbani2019data}: ImageNet-pretrained ResNet18 \citep{deng2009imagenet,he2016deep}, CIFAR-10 \citep{krizhevsky2009learning} 1k subset, and 10\% label noise. Table~\ref{tab:vision-sanity} shows that \method{} achieves the best AUROC and strong selected-data accuracy across budgets.

\paragraph{Interpretability and Safety Extensions.} 
We explore the semantic expressiveness of our attribution signal in Appendix~\ref{app:weborganizer-qualitative}, using WebOrganizer labels \citep{wettig2025organize} to show that \method{} provides interpretable, category-level valuations that strongly complement traditional lexical retrieval. Additionally, Appendix Table~\ref{tab:safety-detection} extends our framework to safety-oriented applications, demonstrating that the same foundational scoring mechanism effectively identifies and filters misleading information during fine-tuning.
\begin{table}[t]
  \centering
  \setlength{\tabcolsep}{5pt}
  \begin{tabular*}{\columnwidth}{@{\extracolsep{\fill}}cccc@{}}
    \toprule
    Retained & BM25 $\rho$ & Add. $\rho$ & \method{} $\rho$ \\
    \midrule
    0.20 & -0.153 & -0.292 & \textbf{0.516} \\
    0.50 & 0.297 & 0.285 & \textbf{0.503} \\
    0.80 & 0.230 & 0.165 & \textbf{0.490} \\
    0.90 & 0.187 & 0.254 & \textbf{0.447} \\
    0.95 & -0.688 & 0.065 & \textbf{0.279} \\
    0.99 & -0.075 & \textbf{0.210} & \textbf{0.210} \\
    \bottomrule
  \end{tabular*}
  \caption{\textbf{Regime-dependence of interaction effects.} Performance is evaluated on the SciQ retention-ratio environment. BM25 serves as a lexical baseline, while Add.\ represents the relevance-only additive core of \method{}. All method columns report task-level Spearman $\rho$. The 0.95 and 0.99 rows represent high-retention stress regimes, evaluating behavior under extreme sparsity of removed data.}
  \label{tab:discussion-ratio}
\end{table}

\section{Discussion}

\paragraph{When do subset interactions matter most?}
The necessity of interaction-aware attribution is inherently tied to the intervention regime. In practical curation scenarios where a selected candidate subset diverges significantly from the reference corpus, ranking quality is no longer dictated by isolated relevance alone. Instead, redundant examples induce gradient saturation along shared update directions, whereas complementary examples jointly maximize semantic coverage. Table~\ref{tab:discussion-ratio} quantifies this phenomenon within the SciQ LDS environment. Across substantial and moderate retention fractions, our geometric surrogate consistently outperforms its additive counterpart by modeling these subset interactions. Conversely, at an extreme 0.99 retention rate, the candidate subsets differ by only a negligible slice; here, the perturbation approaches a first-order additive limit. Crucially, \method{} elegantly reduces to its additive core in this regime, inherently suppressing unstable interaction noise rather than amplifying it. This behavior aligns perfectly with the theoretical formulation of the quadratic term: it dominates when geometric interactions dictate the relative ordering of interventions, and gracefully diminishes when interventions are near-identical micro-perturbations.

\paragraph{Semantic interpretability of the geometric surrogate.}
While \method{} is primarily formulated as a robust counterfactual predictor, its underlying geometry simultaneously yields highly interpretable corpus-level insights. As detailed in the WebOrganizer analysis (Appendix~\ref{app:weborganizer-qualitative}), our attribution scores and traditional lexical retrieval capture fundamentally complementary structures. For topic-level retrieval, \method{} functions competitively as a standalone metric, but fusing it with BM25 yields the optimal normalized Recall@1000. An analogous complementarity emerges for format labels when bridging \method{} with query overlap (Table~\ref{tab:weborganizer-domain}). This synergy confirms that the learned attribution signal captures deeper semantic utility rather than merely mirroring surface-level lexical overlap. Furthermore, the checkpoint-wise category visualization (Figure~\ref{fig:weborganizer-contribution}) exposes explicit, signed source preferences: \method{} attributes positive expected value to knowledge-dense, article-like regions for the SciQ target, while actively penalizing boilerplate-heavy formats such as notices, support pages, and FAQs. Ultimately, these interpretable dynamics demonstrate that the geometric surrogate extends far beyond subset ranking, offering a principled mechanism for transparent corpus auditing and dataset valuation.

\section{Related Work}
\label{app:related-work}

Our work is situated at the intersection of data attribution, scalable influence estimation, subset counterfactual modeling, and pretraining data auditing. We shift the core objective from assigning isolated scores to individual examples toward accurately predicting the empirical effects of concrete data interventions.

\paragraph{Data attribution and data valuation.}
Training data attribution investigates how predictions depend on specific examples. Methods like influence functions \citep{koh2017understanding}, representer points \citep{yeh2018representer}, and RelatIF \citep{barshan2020relatif} estimate local training contributions. Alternatively, Shapley-value methods formalize data value through marginal contributions across training coalitions \citep{ghorbani2019data}. While interaction-aware, Shapley aggressively averages these coalition dynamics into a static scalar per example, and exact estimation is computationally prohibitive. In-Run Data Shapley mitigates this cost along a single training trajectory \citep{wang2025data}. In contrast, we explicitly retain reusable train-train geometry to predict the utility of concrete subset interventions.

\paragraph{Scalable gradient-based attribution.}
Recent advancements make gradient attribution viable for large-scale models. TracIn integrates gradient inner products across training checkpoints \citep{pruthi2020estimating}. Other methods scale Hessian-based influence \citep{schioppa2022scaling}, analyze LLM generalization \citep{grosse2023studying}, derive efficient estimators for tuned models \citep{kwon2024datainf}, accelerate attribution via projected features \citep{park2023trak}, and adapt gradient similarity for instruction tuning \citep{xia2024less}. Closest to our domain, \citet{chang2024scalable} scale gradient-based influence to LLM pretraining. While these methods successfully deliver scalable first-order signals, we focus exclusively on subset-level counterfactual fidelity rather than retrieving isolated influential examples.


\paragraph{Pretraining data auditing and selection.}
Extensive corpus analyses reveal substantial duplication, toxic content, and benchmark contamination in pretraining datasets \citep{elazar2024whats}. Existing data selection techniques utilize gradient-derived relevance \citep{xia2024less} or pretraining-scale tracing \citep{chang2024scalable} to surface beneficial examples. \method{} addresses a distinct structural challenge: predicting the relative downstream utility of candidate pretraining subsets. In this setting, redundancy, coverage, and subset interactions are first-class concerns, rendering purely lexical retrieval or independent example scoring mathematically insufficient.

\section{Conclusion}

This work reframes pretraining data attribution from isolated example scoring to subset-level counterfactual utility prediction. \method{} turns this view into a scalable surrogate by combining a smoothness-motivated geometric penalty, low-dimensional sketches, and development-only protocol selection. Across subset-retraining benchmarks, the method improves counterfactual ranking fidelity over scalable baselines, while the application diagnostics show that the same interaction-aware signal remains useful beyond the core LDS setting. These results suggest that subset geometry is a practical ingredient for auditing and optimizing massive pretraining corpora.

\section*{Limitations}

Our theoretical formulation relies on a one-step smoothness lower bound. This is an approximation rather than a strict mathematical guarantee for full non-convex Transformer pretraining. Consequently, we rely on subset-retraining LDS as an empirical counterfactual test rather than claiming an exact analytical match.

Empirically, our validation centers on a 50M-parameter model and a 10B-token Common Crawl pool. While this scale enables strictly controlled interventions, it does not automatically guarantee identical behavior for frontier-scale models or radically different corpus mixtures. Scaling these findings remains an important empirical direction.

Furthermore, \method{} utilizes low-dimensional feature sketches to ensure computational scalability. This compression inevitably discards fine-grained attention interactions and long-range training dependencies, rendering the method a scalable surrogate rather than a lossless influence calculation.

Finally, our lower-confidence bound protocols are explicitly target-conditioned. We do not claim that \method{} provides a single universal data-quality score for all tasks. Additionally, extreme data interventions with high-overlap deletion ratios may necessitate specialized ratio-aware scaling adjustments.
\section*{Ethical Considerations}

Data attribution can support responsible corpus auditing by identifying data
regions that are influential for target behaviors, harmful outputs, or
benchmark-like evaluations. At the same time, attribution scores are
approximations and should not be treated as definitive evidence that a single
document caused a model behavior. We recommend reporting uncertainty, negative
controls, and failure modes whenever attribution is used for data governance.

Pretraining corpora may contain copyrighted, private, toxic, or benchmark-
contaminating content. Releasing raw influential examples or high-resolution
attribution tables can expose sensitive text or facilitate reconstruction of
corpus membership. Any release of data examples, scores, or source-retrieval
outputs should follow the underlying dataset licenses and privacy constraints.
\bibliography{references}

@inproceedings{koh2017understanding,
  title     = {Understanding Black-box Predictions via Influence Functions},
  author    = {Pang Wei Koh and Percy Liang},
  booktitle = {Proceedings of the 34th International Conference on Machine Learning},
  pages     = {1885--1894},
  year      = {2017},
  editor    = {Precup, Doina and Teh, Yee Whye},
  volume    = {70},
  series    = {Proceedings of Machine Learning Research},
  month     = {06--11 Aug},
  publisher = {PMLR},
  url       = {https://proceedings.mlr.press/v70/koh17a.html}
}

@inproceedings{vaswani2017attention,
 author = {Vaswani, Ashish and Shazeer, Noam and Parmar, Niki and Uszkoreit, Jakob and Jones, Llion and Gomez, Aidan N and Kaiser, \L ukasz and Polosukhin, Illia},
 booktitle = {Advances in Neural Information Processing Systems},
 editor = {I. Guyon and U. Von Luxburg and S. Bengio and H. Wallach and R. Fergus and S. Vishwanathan and R. Garnett},
 pages = {},
 publisher = {Curran Associates, Inc.},
 title = {Attention is All you Need},
 url = {https://proceedings.neurips.cc/paper_files/paper/2017/file/3f5ee243547dee91fbd053c1c4a845aa-Paper.pdf},
 volume = {30},
 year = {2017}
}

@misc{radford2019language,
  title = {Language Models Are Unsupervised Multitask Learners},
  author={Radford, Alec and Wu, Jeffrey and Child, Rewon and Luan, David and Amodei, Dario and Sutskever, Ilya and others},
  year = {2019},
  howpublished = {OpenAI technical report},
  url = {https://cdn.openai.com/better-language-models/language_models_are_unsupervised_multitask_learners.pdf}
}

@inproceedings{yeh2018representer,
 author = {Yeh, Chih-Kuan and Kim, Joon and Yen, Ian En-Hsu and Ravikumar, Pradeep K},
 booktitle = {Advances in Neural Information Processing Systems},
 editor = {S. Bengio and H. Wallach and H. Larochelle and K. Grauman and N. Cesa-Bianchi and R. Garnett},
 pages = {},
 publisher = {Curran Associates, Inc.},
 title = {Representer Point Selection for Explaining Deep Neural Networks},
 url = {https://proceedings.neurips.cc/paper_files/paper/2018/file/8a7129b8f3edd95b7d969dfc2c8e9d9d-Paper.pdf},
 volume = {31},
 year = {2018}
}

@inproceedings{barshan2020relatif,
  title     = {RelatIF: Identifying Explanatory Training Samples via Relative Influence},
  author    = {Barshan, Elnaz and Brunet, Marc-Etienne and Dziugaite, Gintare Karolina},
  booktitle = {Proceedings of the Twenty Third International Conference on Artificial Intelligence and Statistics},
  pages     = {1899--1909},
  year      = {2020},
  editor    = {Chiappa, Silvia and Calandra, Roberto},
  volume    = {108},
  series    = {Proceedings of Machine Learning Research},
  month     = {26--28 Aug},
  publisher = {PMLR},
  url       = {https://proceedings.mlr.press/v108/barshan20a.html}
}

@inproceedings{ghorbani2019data,
  title     = {Data Shapley: Equitable Valuation of Data for Machine Learning},
  author    = {Ghorbani, Amirata and Zou, James},
  booktitle = {Proceedings of the 36th International Conference on Machine Learning},
  pages     = {2242--2251},
  year      = {2019},
  editor    = {Chaudhuri, Kamalika and Salakhutdinov, Ruslan},
  volume    = {97},
  series    = {Proceedings of Machine Learning Research},
  month     = {09--15 Jun},
  publisher = {PMLR},
  url       = {https://proceedings.mlr.press/v97/ghorbani19c.html}
}

@inproceedings{mihaylov2018openbookqa,
  title     = {Can a Suit of Armor Conduct Electricity? A New Dataset for Open Book Question Answering},
  author    = {Mihaylov, Todor  and
               Clark, Peter  and
               Khot, Tushar  and
               Sabharwal, Ashish},
  editor    = {Riloff, Ellen  and
               Chiang, David  and
               Hockenmaier, Julia  and
               Tsujii, Jun{'}ichi},
  booktitle = {Proceedings of the 2018 Conference on Empirical Methods in Natural Language Processing},
  month     = oct # {-} # nov,
  year      = {2018},
  address   = {Brussels, Belgium},
  publisher = {Association for Computational Linguistics},
  url       = {https://aclanthology.org/D18-1260/},
  doi       = {10.18653/v1/D18-1260},
  pages     = {2381--2391}
}

@inproceedings{talmor2019commonsenseqa,
  title     = {{C}ommonsense{QA}: A Question Answering Challenge Targeting Commonsense Knowledge},
  author    = {Talmor, Alon  and
               Herzig, Jonathan  and
               Lourie, Nicholas  and
               Berant, Jonathan},
  editor    = {Burstein, Jill  and
               Doran, Christy  and
               Solorio, Thamar},
  booktitle = {Proceedings of the 2019 Conference of the North {A}merican Chapter of the Association for Computational Linguistics: Human Language Technologies, Volume 1 (Long and Short Papers)},
  month     = jun,
  year      = {2019},
  address   = {Minneapolis, Minnesota},
  publisher = {Association for Computational Linguistics},
  url       = {https://aclanthology.org/N19-1421/},
  doi       = {10.18653/v1/N19-1421},
  pages     = {4149--4158}
}

@article{raffel2020exploring,
  author  = {Colin Raffel and Noam Shazeer and Adam Roberts and Katherine Lee and Sharan Narang and Michael Matena and Yanqi Zhou and Wei Li and Peter J. Liu},
  title   = {Exploring the Limits of Transfer Learning with a Unified Text-to-Text Transformer},
  journal = {Journal of Machine Learning Research},
  year    = {2020},
  volume  = {21},
  number  = {140},
  pages   = {1--67},
  url     = {http://jmlr.org/papers/v21/20-074.html}
}

@inproceedings{wang2025data,
 author = {Wang, Jiachen (Tianhao) and Mittal, Prateek and Song, Dawn and Jia, Ruoxi},
 booktitle = {International Conference on Learning Representations},
 editor = {Y. Yue and A. Garg and N. Peng and F. Sha and R. Yu},
 pages = {12358--12395},
 title = {Data Shapley in One Training Run},
 url = {https://proceedings.iclr.cc/paper_files/paper/2025/file/20fdaf67581e6d7157376d1ed584040a-Paper-Conference.pdf},
 volume = {2025},
 year = {2025}
}

@inproceedings{pruthi2020estimating,
 author = {Pruthi, Garima and Liu, Frederick and Kale, Satyen and Sundararajan, Mukund},
 booktitle = {Advances in Neural Information Processing Systems},
 editor = {H. Larochelle and M. Ranzato and R. Hadsell and M.F. Balcan and H. Lin},
 pages = {19920--19930},
 publisher = {Curran Associates, Inc.},
 title = {Estimating Training Data Influence by Tracing Gradient Descent},
 url = {https://proceedings.neurips.cc/paper_files/paper/2020/file/e6385d39ec9394f2f3a354d9d2b88eec-Paper.pdf},
 volume = {33},
 year = {2020}
}

@inproceedings{schioppa2022scaling,
  title={Scaling up influence functions},
  author={Schioppa, Andrea and Zablotskaia, Polina and Vilar, David and Sokolov, Artem},
  booktitle={Proceedings of the AAAI Conference on Artificial Intelligence},
  volume={36},
  pages={8179--8186},
  year={2022}
}

@misc{grosse2023studying,
  title={Studying Large Language Model Generalization with Influence Functions}, 
  author={Roger Grosse and Juhan Bae and Cem Anil and Nelson Elhage and Alex Tamkin and Amirhossein Tajdini and Benoit Steiner and Dustin Li and Esin Durmus and Ethan Perez and Evan Hubinger and Kamilė Lukošiūtė and Karina Nguyen and Nicholas Joseph and Sam McCandlish and Jared Kaplan and Samuel R. Bowman},
  year={2023},
  eprint={2308.03296},
  archivePrefix={arXiv},
  primaryClass={cs.LG},
  url={https://arxiv.org/abs/2308.03296}, 
}

@misc{park2023trak,
  title={TRAK: Attributing Model Behavior at Scale}, 
  author={Sung Min Park and Kristian Georgiev and Andrew Ilyas and Guillaume Leclerc and Aleksander Madry},
  year={2023},
  eprint={2303.14186},
  archivePrefix={arXiv},
  primaryClass={stat.ML},
  url={https://arxiv.org/abs/2303.14186}, 
}

@inproceedings{kwon2024datainf,
 author = {Kwon, Yongchan and Wu, Eric and Wu, Kevin and Zou, James Y},
 booktitle = {International Conference on Learning Representations},
 editor = {B. Kim and Y. Yue and S. Chaudhuri and K. Fragkiadaki and M. Khan and Y. Sun},
 pages = {21921--21942},
 title = {DataInf: Efficiently Estimating Data Influence in LoRA-tuned LLMs and Diffusion Models},
 url = {https://proceedings.iclr.cc/paper_files/paper/2024/file/5e84a0f233611a1dc8fb794dc52415a3-Paper-Conference.pdf},
 volume = {2024},
 year = {2024}
}

@inproceedings{xia2024less,
  author = {Xia, Mengzhou and Malladi, Sadhika and Gururangan, Suchin and Arora, Sanjeev and Chen, Danqi},
  title = {LESS: selecting influential data for targeted instruction tuning},
  year = {2024},
  publisher = {JMLR.org},
  booktitle = {Proceedings of the 41st International Conference on Machine Learning},
  articleno = {2221},
  numpages = {29},
  location = {Vienna, Austria},
  series = {ICML'24}
}

@inproceedings{penedo2024fineweb,
 author = {Penedo, Guilherme and Kydl\'{\i}\v{c}ek, Hynek and allal, Loubna Ben and Lozhkov, Anton and Mitchell, Margaret and Raffel, Colin and Von Werra, Leandro and Wolf, Thomas},
 booktitle = {Advances in Neural Information Processing Systems},
 doi = {10.52202/079017-0970},
 editor = {A. Globerson and L. Mackey and D. Belgrave and A. Fan and U. Paquet and J. Tomczak and C. Zhang},
 pages = {30811--30849},
 publisher = {Curran Associates, Inc.},
 title = {The FineWeb Datasets: Decanting the Web for the Finest Text Data at Scale},
 url = {https://proceedings.neurips.cc/paper_files/paper/2024/file/370df50ccfdf8bde18f8f9c2d9151bda-Paper-Datasets_and_Benchmarks_Track.pdf},
 volume = {37},
 year = {2024}
}

@inproceedings{penedo2024refinedweb,
 author = {Penedo, Guilherme and Malartic, Quentin and Hesslow, Daniel and Cojocaru, Ruxandra and Alobeidli, Hamza and Cappelli, Alessandro and Pannier, Baptiste and Almazrouei, Ebtesam and Launay, Julien},
 booktitle = {Advances in Neural Information Processing Systems},
 editor = {A. Oh and T. Naumann and A. Globerson and K. Saenko and M. Hardt and S. Levine},
 pages = {79155--79172},
 publisher = {Curran Associates, Inc.},
 title = {The RefinedWeb Dataset for Falcon LLM: Outperforming Curated Corpora with Web Data Only},
 url = {https://proceedings.neurips.cc/paper_files/paper/2023/file/fa3ed726cc5073b9c31e3e49a807789c-Paper-Datasets_and_Benchmarks.pdf},
 volume = {36},
 year = {2023}
}

@inproceedings{li2024dclm,
 author = {Li, Jeffrey and Fang, Alex and Smyrnis, Georgios and Ivgi, Maor and Jordan, Matt and Gadre, Samir and Bansal, Hritik and Guha, Etash and Keh, Sedrick and Arora, Kushal and Garg, Saurabh and Xin, Rui and Muennighoff, Niklas and Heckel, Reinhard and Mercat, Jean and Chen, Mayee and Gururangan, Suchin and Wortsman, Mitchell and Albalak, Alon and Bitton, Yonatan and Nezhurina, Marianna and Abbas, Amro and Hsieh, Cheng-Yu and Ghosh, Dhruba and Gardner, Josh and Kilian, Maciej and Zhang, Hanlin and Shao, Rulin and Pratt, Sarah and Sanyal, Sunny and Ilharco, Gabriel and Daras, Giannis and Marathe, Kalyani and Gokaslan, Aaron and Zhang, Jieyu and Chandu, Khyathi and Nguyen, Thao and Vasiljevic, Igor and Kakade, Sham and Song, Shuran and Sanghavi, Sujay and Faghri, Fartash and Oh, Sewoong and Zettlemoyer, Luke and Lo, Kyle and El-Nouby, Alaaeldin and Pouransari, Hadi and Toshev, Alexander and Wang, Stephanie and Groeneveld, Dirk and Soldaini, Luca and Koh, Pang Wei and Jitsev, Jenia and Kollar, Thomas and Dimakis, Alexandros G. and Carmon, Yair and Dave, Achal and Schmidt, Ludwig and Shankar, Vaishaal},
 booktitle = {Advances in Neural Information Processing Systems},
 doi = {10.52202/079017-0455},
 editor = {A. Globerson and L. Mackey and D. Belgrave and A. Fan and U. Paquet and J. Tomczak and C. Zhang},
 pages = {14200--14282},
 publisher = {Curran Associates, Inc.},
 title = {DataComp-LM: In search of the next generation of training sets for language models},
 url = {https://proceedings.neurips.cc/paper_files/paper/2024/file/19e4ea30dded58259665db375885e412-Paper-Datasets_and_Benchmarks_Track.pdf},
 volume = {37},
 year = {2024}
}

@inproceedings{ilyas2022datamodels,
  title     = {Datamodels: Understanding Predictions with Data and Data with Predictions},
  author    = {Ilyas, Andrew and Park, Sung Min and Engstrom, Logan and Leclerc, Guillaume and Madry, Aleksander},
  booktitle = {Proceedings of the 39th International Conference on Machine Learning},
  pages     = {9525--9587},
  year      = {2022},
  editor    = {Chaudhuri, Kamalika and Jegelka, Stefanie and Song, Le and Szepesvari, Csaba and Niu, Gang and Sabato, Sivan},
  volume    = {162},
  series    = {Proceedings of Machine Learning Research},
  month     = {17--23 Jul},
  publisher = {PMLR},
  url       = {https://proceedings.mlr.press/v162/ilyas22a.html}
}

@inproceedings{gu2025olmes,
  title     = {{OLMES}: A Standard for Language Model Evaluations},
  author    = {Gu, Yuling  and
               Tafjord, Oyvind  and
               Kuehl, Bailey  and
               Haddad, Dany  and
               Dodge, Jesse  and
               Hajishirzi, Hannaneh},
  editor    = {Chiruzzo, Luis  and
               Ritter, Alan  and
               Wang, Lu},
  booktitle = {Findings of the Association for Computational Linguistics: NAACL 2025},
  month     = apr,
  year      = {2025},
  address   = {Albuquerque, New Mexico},
  publisher = {Association for Computational Linguistics},
  url       = {https://aclanthology.org/2025.findings-naacl.282/},
  doi       = {10.18653/v1/2025.findings-naacl.282},
  pages     = {5020--5048},
  isbn      = {979-8-89176-195-7}
}

@misc{olmo2025olmo3,
  title={Olmo 3}, 
  author={Team Olmo},
  year={2026},
  eprint={2512.13961},
  archivePrefix={arXiv},
  primaryClass={cs.CL},
  url={https://arxiv.org/abs/2512.13961}, 
}

@inproceedings{elazar2024whats,
 author = {Elazar, Yanai and Bhagia, Akshita and Magnusson, Ian and Ravichander, Abhilasha and Schwenk, Dustin and Suhr, Alane and Walsh, Pete and Groeneveld, Dirk and Soldaini, Luca and Singh, Sameer and Hajishirzi, Hannaneh  and Smith, Noah and Dodge, Jesse},
 booktitle = {International Conference on Learning Representations},
 editor = {B. Kim and Y. Yue and S. Chaudhuri and K. Fragkiadaki and M. Khan and Y. Sun},
 pages = {7735--7790},
 title = {What\textquotesingle s In My Big Data?},
 url = {https://proceedings.iclr.cc/paper_files/paper/2024/file/1f7336fd66b6e6e63d1801fdd5930a5a-Paper-Conference.pdf},
 volume = {2024},
 year = {2024}
}

@inproceedings{chang2024scalable,
 author = {Chang, Tyler and Rajagopal, Dheeraj and Bolukbasi, Tolga and Dixon, Lucas and Tenney, Ian},
 booktitle = {International Conference on Learning Representations},
 editor = {Y. Yue and A. Garg and N. Peng and F. Sha and R. Yu},
 pages = {40976--40997},
 title = {Scalable Influence and Fact Tracing for Large Language Model Pretraining},
 url = {https://proceedings.iclr.cc/paper_files/paper/2025/file/65798a76cc176c29b6bfefe84b0a03ff-Paper-Conference.pdf},
 volume = {2025},
 year = {2025}
}

@inproceedings{welbl2017crowdsourcing,
  title     = {Crowdsourcing Multiple Choice Science Questions},
  author    = {Welbl, Johannes  and
               Liu, Nelson F.  and
               Gardner, Matt},
  editor    = {Derczynski, Leon  and
               Xu, Wei  and
               Ritter, Alan  and
               Baldwin, Tim},
  booktitle = {Proceedings of the 3rd Workshop on Noisy User-generated Text},
  month     = sep,
  year      = {2017},
  address   = {Copenhagen, Denmark},
  publisher = {Association for Computational Linguistics},
  url       = {https://aclanthology.org/W17-4413/},
  doi       = {10.18653/v1/W17-4413},
  pages     = {94--106}
}

@article{nemhauser1978analysis,
  title = {An Analysis of Approximations for Maximizing Submodular Set Functions---I},
  author = {Nemhauser, George L. and Wolsey, Laurence A. and Fisher, Marshall L.},
  journal = {Mathematical Programming},
  volume = {14},
  number = {1},
  pages = {265--294},
  year = {1978},
  doi = {10.1007/BF01588971},
  url = {https://doi.org/10.1007/BF01588971}
}

@article{gretton2012kernel,
  author  = {Arthur Gretton and Karsten M. Borgwardt and Malte J. Rasch and Bernhard Sch{{\"o}}lkopf and Alexander Smola},
  title   = {A Kernel Two-Sample Test},
  journal = {Journal of Machine Learning Research},
  year    = {2012},
  volume  = {13},
  number  = {25},
  pages   = {723--773},
  url     = {http://jmlr.org/papers/v13/gretton12a.html}
}

@article{dasgupta2003elementary,
  author = {Dasgupta, Sanjoy and Gupta, Anupam},
  title = {An elementary proof of a theorem of Johnson and Lindenstrauss},
  journal = {Random Structures \& Algorithms},
  volume = {22},
  number = {1},
  pages = {60-65},
  doi = {https://doi.org/10.1002/rsa.10073},
  url = {https://onlinelibrary.wiley.com/doi/abs/10.1002/rsa.10073},
  eprint = {https://onlinelibrary.wiley.com/doi/pdf/10.1002/rsa.10073},
  year = {2003}
}

@article{achlioptas2003database,
  title = {Database-friendly random projections: Johnson-Lindenstrauss with binary coins},
  author = {Dimitris Achlioptas},
  journal = {Journal of Computer and System Sciences},
  volume = {66},
  number = {4},
  pages = {671-687},
  year = {2003},
  note = {Special Issue on PODS 2001},
  issn = {0022-0000},
  doi = {https://doi.org/10.1016/S0022-0000(03)00025-4},
  url = {https://www.sciencedirect.com/science/article/pii/S0022000003000254}
}

@inproceedings{martens2015optimizing,
  title = {Optimizing Neural Networks with {Kronecker}-factored Approximate Curvature},
  author = {Martens, James and Grosse, Roger},
  booktitle = {Proceedings of the 32nd International Conference on Machine Learning},
  series = {Proceedings of Machine Learning Research},
  volume = {37},
  pages = {2408--2417},
  publisher = {PMLR},
  year = {2015},
  url = {https://proceedings.mlr.press/v37/martens15.html}
}

@inproceedings{he2016deep,
  title = {Deep Residual Learning for Image Recognition},
  author = {He, Kaiming and Zhang, Xiangyu and Ren, Shaoqing and Sun, Jian},
  booktitle = {Proceedings of the IEEE Conference on Computer Vision and Pattern Recognition},
  pages = {770--778},
  year = {2016},
  doi = {10.1109/CVPR.2016.90},
  url = {https://openaccess.thecvf.com/content_cvpr_2016/html/He_Deep_Residual_Learning_CVPR_2016_paper.html}
}

@inproceedings{deng2009imagenet,
  title = {{ImageNet}: A Large-Scale Hierarchical Image Database},
  author = {Deng, Jia and Dong, Wei and Socher, Richard and Li, Li-Jia and Li, Kai and Fei-Fei, Li},
  booktitle = {2009 IEEE Conference on Computer Vision and Pattern Recognition},
  pages = {248--255},
  year = {2009},
  doi = {10.1109/CVPR.2009.5206848},
  url = {https://ieeexplore.ieee.org/document/5206848}
}

@techreport{krizhevsky2009learning,
  title = {Learning Multiple Layers of Features from Tiny Images},
  author = {Krizhevsky, Alex},
  institution = {University of Toronto},
  year = {2009},
  url = {https://www.cs.toronto.edu/~kriz/learning-features-2009-TR.pdf}
}

@misc{wettig2025organize,
  title = {Organize the Web: Constructing Domains Enhances Pre-Training Data Curation},
  author = {Wettig, Alexander and Lo, Kyle and Min, Sewon and Hajishirzi, Hannaneh and Chen, Danqi and Soldaini, Luca},
  year = {2025},
  eprint = {2502.10341},
  archivePrefix = {arXiv},
  primaryClass = {cs.CL},
  doi = {10.48550/arXiv.2502.10341},
  url = {https://arxiv.org/abs/2502.10341}
}

@misc{min2026gem,
  title = {{GEM}: Geometric Entropy Mixing for Optimal {LLM} Data Curation},
  author = {Min, Yue and Qiao, Ziyun and Chen, Ruining and Li, Yujun},
  year = {2026},
  eprint = {2605.26121},
  archivePrefix = {arXiv},
  primaryClass = {cs.LG},
  doi = {10.48550/arXiv.2605.26121},
  url = {https://arxiv.org/abs/2605.26121}
}

\appendix

\section{Preliminaries and Terminology}
\label{app:preliminaries}

This section collects the conceptual background and notation used throughout the paper. It distinguishes training examples from evaluation targets and separates realized counterfactual utilities from the scalable predictors used to approximate them.

\paragraph{Learning and intervention setting.}
Let $\dataset=\{1,\ldots,n\}$ index a pretraining corpus $\{z_i\}_{i\in\dataset}$. A fixed training recipe specifies the architecture, initialization distribution, optimizer, update budget, and data-order policy. For a retained subset $S\subseteq\dataset$, $\theta_S$ denotes the parameters obtained by training on $S$ under this recipe. Thus, $S$ is a data intervention, whereas $\theta_S$ is its resulting model. A retention intervention is represented directly by $S$; a deletion intervention may equivalently be described by the omitted set $T=\dataset\setminus S$. Holding the recipe and compute budget fixed prevents candidate subsets from being confounded with different optimization budgets.

The outcome associated with $S$ is counterfactual: it describes what the model would do if it were trained on $S$ rather than on another candidate subset or the full corpus. Realizing this outcome for many subsets requires a separate training run for each $S$, which makes exhaustive evaluation prohibitive.

\paragraph{Evaluation targets and utility.}
An evaluation target $e$ specifies the behavior whose dependence on training data is being studied. It may be a held-out example, a collection of examples, or a task-level evaluation set. Indices $i$ and $e$ therefore have distinct roles: $i$ indexes candidate training data, whereas $e$ indexes an evaluation unit. With $L_e(\theta)$ denoting average target loss, we use negative loss as utility:
\begin{equation}
  \utility_e(\theta)=-L_e(\theta),
  \qquad
  \utility_e(S)=\utility_e(\theta_S).
  \label{eq:prelim-utility}
\end{equation}
Higher utility therefore means lower target loss. The counterfactual object of interest is the set function $S\mapsto\utility_e(S)$.

\paragraph{Pointwise and subset-level attribution.}
Conventional pointwise attribution assigns each training example a scalar $a_{i,e}$. A common subset extension sums these scores,
\begin{equation}
  \widehat{\utility}^{\mathrm{add}}_e(S)
  =\sum_{i\in S}a_{i,e},
  \label{eq:prelim-additive}
\end{equation}
which assumes that the contribution of one example is independent of the other examples in $S$. This can miss redundancy among duplicates or overlapping documents and complementary coverage among distinct examples. Subset-level attribution instead approximates the full set function and permits the predicted contribution of one example to depend on the rest of the intervention. \method{} introduces an explicit interaction term for this purpose.

\paragraph{Scalable attribution.}
Here, \emph{scalable} means that computation is amortized over many subset queries. A method may make a one-time pass over a reference model and corpus, but it must reuse the resulting per-example statistics to score new candidate subsets rather than retraining for every query. Scalability is not equivalence to full retraining: artifact construction, storage, and repeated subset evaluation must be considered together with fidelity to realized retraining outcomes.

\paragraph{Counterfactual fidelity and LDS.}
Let $\mathcal{B}=\{S_b\}_{b=1}^{B}$ be a finite family of candidate interventions. Subset retraining gives realized utilities $\utility_e(S_b)$, and an attribution method gives predictions $\widehat{\utility}_e(S_b)$. Following datamodeling and TRAK \citep{ilyas2022datamodels,park2023trak}, LDS is their Spearman rank correlation:
\begin{equation}
  \rho_e=\operatorname{Spearman}_{b}
  \left(\widehat{\utility}_e(S_b),\utility_e(S_b)\right).
  \label{eq:prelim-lds}
\end{equation}
A high LDS means that the surrogate preserves the empirical ordering of candidate interventions even if its absolute utility scale is not calibrated. ``Linear Datamodeling Score'' is the established name of the protocol; using it as a rank diagnostic does not require the evaluated surrogate to be linear.

\paragraph{Causal-LM quantities.}
For sequence $i$ and supervised next-token position $t$, $p_{i,t}$ is the model softmax vector, $y_{i,t+1}$ the ground-truth token ID, and $h_{i,t}$ the final-layer hidden state. The token loss and output residual are
\begin{equation}
  \begin{aligned}
    \ell_{\mathrm{tok}}(i,t;\theta)
    &=-\log p_{i,t}[y_{i,t+1}],\\
    \xi_{i,t}
    &=p_{i,t}-\operatorname{onehot}(y_{i,t+1}).
  \end{aligned}
\end{equation}
The output-projection gradient has outer-product form $h_{i,t}\xi_{i,t}^{\top}$ up to matrix orientation. For valid supervised positions $P_i$, the sequence loss is $\ell_i(\theta)=|P_i|^{-1}\sum_{t\in P_i}\ell_{\mathrm{tok}}(i,t;\theta)$. Section~\ref{subsec:grasp-relevance} specifies how \method{} pools these token-level quantities before whitening and sketching them.

\section{Method Details}
\label{app:theory}

This appendix provides the formal derivations supporting the interaction-aware surrogate utilized for the counterfactual evaluation in Section~\ref{sec:experiments}. Our primary objective is to delineate which components of our formulation are direct mathematical consequences of objective smoothness and which originate from the approximation conditions imposed by the low-dimensional sketch.

\paragraph{Local influence direction.}
The local update direction $u_i$ introduced in the main text is formally instantiated through classical influence function analysis. Consider the weighted regularized empirical risk
\begin{equation}
  R(\theta,\alpha)
  =
  \frac{1}{n}\sum_{i=1}^{n}\alpha_i \ell_i(\theta)
  + \lambda_{\mathrm{reg}} r(\theta),
  \label{eq:weighted-risk}
\end{equation}
where $\alpha_i$ represents the weight assigned to training example $i$. Assume $R$ is twice continuously differentiable. Let $\theta^\star(\alpha)$ be a local minimizer such that $\nabla_\theta R(\theta^\star(\alpha), \alpha) = 0$. If the Hessian $H = \nabla_\theta^2 R(\theta^\star(\alpha), \alpha)$ is strictly positive definite, the Implicit Function Theorem guarantees the existence of a differentiable solution branch. Differentiating the stationarity condition with respect to $\alpha_i$ yields $\partial\theta^\star(\alpha)/\partial\alpha_i = -n^{-1}H^{-1}\nabla_\theta \ell_i(\theta^\star(\alpha))$. 

Recall that the evaluation utility is defined as $\utility_e(\theta) = -L_e(\theta)$. Applying the chain rule evaluated at the reference checkpoint $\theta$ gives the marginal utility change:
\begin{equation}
  \frac{\partial \utility_e}{\partial \alpha_i}
  =
  \frac{1}{n}
  \left\langle \nabla_\theta L_e(\theta), H^{-1} \nabla_\theta \ell_i(\theta) \right\rangle.
  \label{eq:first-order-influence}
\end{equation}
This derivation motivates defining the local influence-style update direction as $u_i = H^{-1}\nabla_\theta \ell_i(\theta)$, absorbing the shared $1/n$ scaling factor into the global learning rate.

\paragraph{Smoothness lower bound.}
Building upon this local linear approximation, we derive a certified lower bound for subset utility improvements. Assume the target loss function $L_e$ is $\beta_e$-smooth. By definition, for any parameter update $\Delta$, the following inequality holds:
\begin{equation}
  L_e(\theta+\Delta)
  \le
  L_e(\theta)
  + \langle \nabla_\theta L_e(\theta), \Delta \rangle
  + \frac{\beta_e}{2}\|\Delta\|^2.
\end{equation}
We model the subset intervention as an aggregated gradient step defined by $\Delta = -\eta D_T$, where $\eta \ge 0$ is the step size. Substituting this into the smoothness condition and utilizing the utility definition $U_e = -L_e$ yields:
\begin{equation}
  \begin{aligned}
    U_e(\theta-\eta D_T) - U_e(\theta)
    &\ge
    \eta\langle \nabla_\theta L_e(\theta), D_T \rangle \\
    &\quad
    - \frac{\beta_e \eta^2}{2}\|D_T\|^2.
  \end{aligned}
\end{equation}
Finally, substituting the subset aggregate direction $D_T = \sum_{i\in T}w_i u_i$ directly recovers the one-step interaction lower bound presented in Theorem~\ref{thm:smoothness-bound}. Crucially, this bound relies solely on the $\beta_e$-smoothness of the target objective and does not require global convexity.

\begin{proposition}[Endpoint smoothness with trajectory mismatch]
\label{prop:endpoint-smoothness}
Let $\theta_{\mathrm{ref}}$ be a shared reference parameter and $\theta_T$ the endpoint obtained after completing the fixed training recipe on subset $T$. Define
\[
  v_T=\theta_{\mathrm{ref}}-\theta_T,
  \qquad
  g_e=\nabla_\theta L_e(\theta_{\mathrm{ref}}).
\]
If $L_e$ has a $\beta_{e,T}$-Lipschitz gradient on the segment joining $\theta_{\mathrm{ref}}$ and $\theta_T$, then the realized endpoint utility satisfies
\begin{equation}
  \Delta_e^{\mathrm{full}}(T)
  \ge
  \langle g_e,v_T\rangle
  -\frac{\beta_{e,T}}{2}\|v_T\|^2,
  \label{eq:endpoint-smoothness}
\end{equation}
where $\Delta_e^{\mathrm{full}}(T)=\utility_e(\theta_T)-\utility_e(\theta_{\mathrm{ref}})$. Moreover, for a reusable proxy $\widehat v_T=\eta D_T$ satisfying $\|v_T-\widehat v_T\|\le\varepsilon_T$,
\begin{align}
  \Delta_e^{\mathrm{full}}(T)
  \ge{}&
  \langle g_e,\widehat v_T\rangle
  -\frac{\beta_{e,T}}{2}\|\widehat v_T\|^2 \notag\\
  &-\varepsilon_T
  \|g_e-\beta_{e,T}\widehat v_T\|
  -\frac{\beta_{e,T}}{2}\varepsilon_T^2.
  \label{eq:trajectory-mismatch}
\end{align}
\end{proposition}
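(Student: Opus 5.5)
The plan is to prove both parts directly: the first bound is the standard descent lemma along the segment, and the second follows by expanding the concave quadratic of the first bound around the proxy. No earlier result is needed beyond the smoothness argument already used for Theorem~\ref{thm:smoothness-bound}. The only care is that smoothness is assumed just on the segment joining $\theta_{\mathrm{ref}}$ and $\theta_T$, so I will use the integral form of Taylor's theorem rather than a global inequality.

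\emph{Step 1 (endpoint bound).} Set $\Delta=\theta_T-\theta_{\mathrm{ref}}=-v_T$. By the fundamental theorem of calculus along $s\mapsto\theta_{\mathrm{ref}}+s\Delta$, $s\in[0,1]$, which stays on the segment,
\[
L_e(\theta_T)-L_e(\theta_{\mathrm{ref}})-\langle g_e,\Delta\rangle=\int_0^1\langle\nabla L_e(\theta_{\mathrm{ref}}+s\Delta)-g_e,\Delta\rangle\,ds .
\]
Cauchy--Schwarz and the $\beta_{e,T}$-Lipschitz gradient bound the integrand by $\beta_{e,T}s\|\Delta\|^2$. Integrating gives $L_e(\theta_T)\le L_e(\theta_{\mathrm{ref}})-\langle g_e,v_T\rangle+\tfrac{\beta_{e,T}}{2}\|v_T\|^2$. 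Since $\utility_e=-L_e$, negating yields \eqref{eq:endpoint-smoothness}. I will track the sign carefully: $v_T$ points from the endpoint back to the reference, mirroring $\eta D_T$ in Theorem~\ref{thm:smoothness-bound}.

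\emph{Step 2 (trajectory mismatch).} Define $f(v)=\langle g_e,v\rangle-\tfrac{\beta_{e,T}}{2}\|v\|^2$, so Step 1 reads $\Delta_e^{\mathrm{full}}(T)\ge f(v_T)$. Write $v_T=\widehat v_T+\delta$ with $\|\delta\|\le\varepsilon_T$. Expanding the square exactly gives
\[
f(v_T)=f(\widehat v_T)+\langle g_e-\beta_{e,T}\widehat v_T,\delta\rangle-\tfrac{\beta_{e,T}}{2}\|\delta\|^2 .
\]
Cauchy--Schwarz bounds the middle term below by $-\varepsilon_T\|g_e-\beta_{e,T}\widehat v_T\|$. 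The last term is at least $-\tfrac{\beta_{e,T}}{2}\varepsilon_T^2$, using $\beta_{e,T}\ge0$. Chaining with Step 1 gives \eqref{eq:trajectory-mismatch}.

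There is no real obstacle; the steps most prone to error are the sign bookkeeping in Step 1 and checking that the segment-only Lipschitz hypothesis suffices. Step 2 evaluates $f$ algebraically and never evaluates $L_e$ at $\widehat v_T$, so no smoothness is needed near the proxy point.
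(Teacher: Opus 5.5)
Your proof is correct and follows the paper's argument. The descent lemma along the segment (which you make explicit via the integral form, using only the segment-wise Lipschitz hypothesis) gives the endpoint bound, and the exact expansion of the concave quadratic around $\widehat v_T$ followed by Cauchy--Schwarz gives the mismatch bound; the paper's proof is these same two steps in compressed form, with $d_T$ in place of your $\delta$.
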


\begin{proof}
Applying the smoothness inequality along the segment from $\theta_{\mathrm{ref}}$ to $\theta_T=\theta_{\mathrm{ref}}-v_T$ and negating the target loss gives Equation~\ref{eq:endpoint-smoothness}. Write $v_T=\widehat v_T+d_T$ with $\|d_T\|\le\varepsilon_T$. Expanding its right-hand side yields the proxy quadratic plus $\langle g_e-\beta_{e,T}\widehat v_T,d_T\rangle-\beta_{e,T}\|d_T\|^2/2$. Cauchy--Schwarz and the bound on $d_T$ give Equation~\ref{eq:trajectory-mismatch}.
\end{proof}

This endpoint result is optimizer- and convexity-agnostic because the completed dynamics are represented by $v_T$. It does not assume that the trajectory mismatch $\varepsilon_T$ vanishes. Controlling $\varepsilon_T$ for arbitrary non-convex pretraining would require additional assumptions on the training dynamics; our LDS experiments instead measure end-to-end ranking fidelity against the realized $\theta_T$. This separates trajectory approximation from the sketch approximation analyzed below.

\paragraph{Set-function curvature and submodularity.}
The idealized surrogate defined in Equation~\ref{eq:ideal-interaction-surrogate} offers a precise set-theoretic interpretation of subset ``interaction.'' Let $F_e(T)=\widehat{\utility}^{\mathrm{int}}_e(T)$. For any elements $i,j\notin T$ with $i\ne j$, the discrete second difference characterizes the marginal interaction:
\begin{equation}
  \begin{aligned}
    \Delta_i\Delta_j F_e(T)
    &=
    F_e(T\cup\{i,j\}) - F_e(T\cup\{i\})\\
    &\quad - F_e(T\cup\{j\}) + F_e(T).
  \end{aligned}
  \label{eq:discrete-second-difference}
\end{equation}
Direct algebraic expansion yields:
\begin{equation}
  \Delta_i\Delta_j F_e(T)
  =
  -2\lambda_e w_iw_j\langle u_i,u_j\rangle.
  \label{eq:ia-discrete-curvature}
\end{equation}
This establishes a critical geometric intuition: aligned update directions naturally produce negative discrete curvature (redundancy penalty), whereas opposing directions can yield positive curvature (complementary coverage). Consequently, \method{} functions as a signed pairwise-interaction set function. In the specific case where $w_i\ge 0$ and all pairs exhibit non-negative inner products ($\langle u_i,u_j\rangle\ge 0$), the surrogate becomes globally submodular, reflecting classical diminishing returns \citep{nemhauser1978analysis}. This conditional submodularity is formalized below.

\begin{proposition}[Local submodularity over aligned clusters]
\label{prop:aligned-submodularity}
Consider the unweighted discrete surrogate defined by
\[
  F(T)=\eta\sum_{i\in T}a_i-\lambda\left\|\sum_{i\in T}u_i\right\|^2,
  \qquad \lambda>0.
\]
Let $\mathcal{D}_{\mathrm{align}}$ be a candidate pool such that the pairwise alignment $\langle u_i,u_j\rangle\ge 0$ holds for all distinct $i,j\in\mathcal{D}_{\mathrm{align}}$. Then, the set function $F$ is submodular on $\mathcal{D}_{\mathrm{align}}$. Moreover, for any $S\subseteq T\subseteq\mathcal{D}_{\mathrm{align}}$ and $k\in\mathcal{D}_{\mathrm{align}}\setminus T$, the marginal diminishing return is strictly positive whenever $\sum_{i\in T\setminus S}\langle u_k,u_i\rangle>0$.
\end{proposition}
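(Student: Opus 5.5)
We compute the marginal gain of $F$ in closed form and compare marginals directly. For any $T\subseteq\mathcal{D}_{\mathrm{align}}$ and $k\notin T$, write $U_T=\sum_{i\in T}u_i$. Expanding the squared norm as in Equation~\ref{eq:quadratic-expansion} gives
\[
  F(T\cup\{k\})-F(T)
  =\eta a_k-\lambda\|u_k\|^2-2\lambda\langle u_k,U_T\rangle .
\]
The first two terms do not depend on $T$. The whole dependence on the context set is therefore carried by the linear cross term $-2\lambda\langle u_k,U_T\rangle$. This is the unweighted specialization of the discrete second difference in Equation~\ref{eq:ia-discrete-curvature}, where each pair contributes $-2\lambda\langle u_i,u_j\rangle$.

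Next, for $S\subseteq T\subseteq\mathcal{D}_{\mathrm{align}}$ and $k\in\mathcal{D}_{\mathrm{align}}\setminus T$, we subtract the two marginals. The constant terms cancel, and we obtain
\[
  \bigl[F(S\cup\{k\})-F(S)\bigr]-\bigl[F(T\cup\{k\})-F(T)\bigr]
  =2\lambda\sum_{i\in T\setminus S}\langle u_k,u_i\rangle .
\]
This uses the identity $U_T-U_S=\sum_{i\in T\setminus S}u_i$. Since $k\notin T$, every index $i\in T\setminus S$ is distinct from $k$. The alignment hypothesis therefore makes each summand nonnegative. With $\lambda>0$, the difference is $\ge 0$, which is exactly the diminishing-returns characterization of submodularity on $\mathcal{D}_{\mathrm{align}}$.

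The strict claim follows from the same identity. The difference equals $2\lambda$ times the stated sum, and $\lambda>0$, so it is strictly positive exactly when $\sum_{i\in T\setminus S}\langle u_k,u_i\rangle>0$.

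There is no genuine obstacle here; the argument is a direct computation. The only point needing care is the bookkeeping that keeps $k\notin T$. This ensures that no diagonal term $\|u_k\|^2$ enters the cross sum, so only pairwise inner products between distinct elements appear and the hypothesis applies. It is also worth noting that the linear relevance term $\eta\sum a_i$ is modular and drops out entirely. As a result, the sign of the $a_i$ plays no role.
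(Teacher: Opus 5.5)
Your proposal is correct and matches the paper's proof essentially line for line: you expand the marginal gain as $\eta a_k-\lambda\|u_k\|^2-2\lambda\sum_{i\in S}\langle u_k,u_i\rangle$, subtract the marginals for $S\subseteq T$ to get $2\lambda\sum_{i\in T\setminus S}\langle u_k,u_i\rangle$, and then read off both submodularity and the strict case from the alignment hypothesis and $\lambda>0$. Your extra check that $k\notin T$ keeps every cross term between distinct indices, so the hypothesis applies, is a small clarification the paper leaves implicit.
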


\begin{proof}
Let $\Delta_kF(S)=F(S\cup\{k\})-F(S)$ denote the marginal gain of adding element $k$ to set $S$. The linear relevance term contributes a constant $\eta a_k$ to every marginal gain. Expanding the quadratic penalty term yields:
\[
  \Delta_k F(S)
  =
  \eta a_k
  -\lambda\|u_k\|^2
  -2\lambda\sum_{i\in S}\langle u_k,u_i\rangle.
\]
To verify submodularity, we examine the difference in marginal gains for $S\subseteq T$:
\[
  \Delta_kF(S)-\Delta_kF(T)
  =
  2\lambda\sum_{i\in T\setminus S}\langle u_k,u_i\rangle.
\]
Under the alignment assumption $\langle u_k,u_i\rangle\ge 0$, this difference is strictly non-negative since $\lambda>0$. This guarantees $\Delta_kF(S) \ge \Delta_kF(T)$, satisfying the formal definition of diminishing returns and proving that $F$ is submodular on $\mathcal{D}_{\mathrm{align}}$. The strict inequality condition follows directly from the same expression.
\end{proof}

\paragraph{Sketched surrogate and error bounds.}
To operationalize this objective at scale, we introduce the sketched surrogate for a fixed nonnegative scale $\lambda$:
\begin{equation}
  \widehat{B}^{\mathrm{sk}}_e(T;\lambda)
  =
  \eta A_e(T)-\lambda K_{\mathrm{raw}}(T).
  \label{eq:sketched-surrogate}
\end{equation}
Let $B^u_e(T;\lambda)=\eta A^u_e(T)-\lambda K^u(T)$ denote the exact local surrogate. If the sketch formulation satisfies bounded deviations on a subset family $\mathcal{T}$ such that
\begin{align}
  |A_e(T)-A^u_e(T)| &\le \epsilon_A(T),\\
  |K_{\mathrm{raw}}(T)-K^u(T)| &\le \epsilon_K(T),
\end{align}
then applying the triangle inequality guarantees that for every $\lambda\ge 0$ and $T\in\mathcal{T}$, the subset prediction error is bounded by:
\begin{equation}
  \left|
  B^u_e(T;\lambda)-\widehat{B}^{\mathrm{sk}}_e(T;\lambda)
  \right|
  \le
  \eta\epsilon_A(T)+\lambda\epsilon_K(T).
  \label{eq:sketch-error-bound}
\end{equation}
In scenarios where only pointwise relevance bounds are accessible, such that $|a_{i,e}-\langle g_e,u_i\rangle|\le\epsilon_a$, we can establish $\epsilon_A(T)=\epsilon_a\sum_{i\in T}|w_i|$. We emphasize that this constitutes a conditional approximation bound. The verification of a tightly bounded sketch error remains an empirical diagnostic rather than a structural guarantee inherent to the architecture.

\begin{proposition}[Local direct-sum approximation to product geometry]
\label{prop:local-direct-sum}
Let $x_i,x_j,r_i,r_j$ denote normalized hidden and residual sketches satisfying $\|x_i\|=\|x_j\|=\|r_i\|=\|r_j\|=1$. Define the squared Euclidean distances as $d_h^2(i,j)=\|x_i-x_j\|^2$ and $d_r^2(i,j)=\|r_i-r_j\|^2$. Consider the product and direct-sum kernels:
\begin{align}
  k_{\mathrm{prod}}(i,j)
  &= \langle x_i,x_j\rangle\langle r_i,r_j\rangle,\\
  k_{\mathrm{dir}}(i,j)
  &= \langle x_i,x_j\rangle+\langle r_i,r_j\rangle.
\end{align}
The product kernel expands as:
\begin{align}
  k_{\mathrm{prod}}(i,j)
  &=
  1-\frac{1}{2}\left(d_h^2(i,j)+d_r^2(i,j)\right) \notag\\
  &\quad
  +\frac{1}{4}d_h^2(i,j)d_r^2(i,j).
\end{align}
Furthermore, the direct-sum kernel satisfies:
\begin{align}
  k_{\mathrm{dir}}(i,j)
  =
  k_{\mathrm{prod}}(i,j)+1
  -\frac{1}{4}d_h^2(i,j)d_r^2(i,j).
\end{align}
Consequently, after omitting the constant offset, the direct-sum kernel and the product kernel agree strictly through the first order in the local squared distances. Their remaining discrepancy is confined to the fourth-order cross term.
If $d_h(i,j)\le\delta_h$ and $d_r(i,j)\le\delta_r$, then
\begin{equation}
  \left|k_{\mathrm{dir}}(i,j)-1-k_{\mathrm{prod}}(i,j)\right|
  \le \frac{1}{4}\delta_h^2\delta_r^2.
  \label{eq:direct-sum-local-bound}
\end{equation}
\end{proposition}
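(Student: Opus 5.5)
The plan rests on one polarization identity for unit vectors. For $\|x_i\|=\|x_j\|=1$ we have $d_h^2(i,j)=\|x_i\|^2+\|x_j\|^2-2\langle x_i,x_j\rangle=2-2\langle x_i,x_j\rangle$. Hence
\[
\langle x_i,x_j\rangle = 1-\tfrac12 d_h^2(i,j),
\]
and in the same way $\langle r_i,r_j\rangle = 1-\tfrac12 d_r^2(i,j)$. To keep the bookkeeping short, set $a=\tfrac12 d_h^2(i,j)$ and $b=\tfrac12 d_r^2(i,j)$, so both inner products take the form $1-a$ and $1-b$.

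Both kernels then become polynomials in $a$ and $b$. For the product kernel,
\[
k_{\mathrm{prod}}=(1-a)(1-b)=1-a-b+ab.
\]
Substituting back $a+b=\tfrac12(d_h^2+d_r^2)$ and $ab=\tfrac14 d_h^2 d_r^2$ gives the stated expansion. For the direct-sum kernel, $k_{\mathrm{dir}}=(1-a)+(1-b)=2-a-b$. Subtracting the product form gives $k_{\mathrm{dir}}-k_{\mathrm{prod}}=1-ab$, which is exactly the identity $k_{\mathrm{dir}}=k_{\mathrm{prod}}+1-\tfrac14 d_h^2 d_r^2$.

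The qualitative claim follows by reading off these expressions. After the constant offset $1$ is removed, $k_{\mathrm{dir}}-1$ and $k_{\mathrm{prod}}$ share the constant term $1$ and the linear terms $-\tfrac12(d_h^2+d_r^2)$ in the squared distances. They differ only by the cross term $\tfrac14 d_h^2 d_r^2$, which is of fourth order in the distances themselves.

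For the quantitative bound (\ref{eq:direct-sum-local-bound}), the identity gives
\[
|k_{\mathrm{dir}}-1-k_{\mathrm{prod}}|=\tfrac14 d_h^2 d_r^2.
\]
Since $0\le d_h\le\delta_h$ and $0\le d_r\le\delta_r$, monotonicity of squaring on nonnegative reals bounds this by $\tfrac14\delta_h^2\delta_r^2$.

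No step is a real obstacle; the argument is purely algebraic. The only points needing care are consistent bookkeeping of the offset $1$ and using the unit-norm hypothesis to write the inner products in terms of distances. No earlier result from the paper is needed.
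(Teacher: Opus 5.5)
Your proposal is correct and follows essentially the same route as the paper: you rewrite each unit-vector inner product as $1-\tfrac12 d^2$, expand the product kernel, sum the two inner products for the direct-sum kernel, and subtract. You also spell out the final bound via $|k_{\mathrm{dir}}-1-k_{\mathrm{prod}}|=\tfrac14 d_h^2 d_r^2\le\tfrac14\delta_h^2\delta_r^2$, a step the paper's proof leaves implicit.
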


\begin{proof}
For any unit vectors, the inner product relates to Euclidean distance via $\langle x_i,x_j\rangle=1-\|x_i-x_j\|^2/2$. Applying this identity identically to the residual sketches, substituting both into the definition of $k_{\mathrm{prod}}$, and expanding the product yields the first result. Summing the two inner products directly yields $k_{\mathrm{dir}}=2-\frac{1}{2}(d_h^2(i,j)+d_r^2(i,j))$. Subtracting the expanded form of $k_{\mathrm{prod}}(i,j)$ from this sum produces the second result.
\end{proof}

We stress that this proposition strictly characterizes a local approximation and does not imply a global equivalence between the direct-sum and product kernels. The omitted cross term can be non-negligible when both hidden and residual distances are large. The constant offset is mathematically rank-invariant for fixed-size unweighted subset interventions. Furthermore, this discrepancy is structurally mitigated by centered components such as $K_{\mathrm{cent}}$. Outside these specific configurations, the direct-sum kernel inherently operates as a scalable empirical surrogate. Likewise, Equation~\ref{eq:sketch-error-bound} only propagates assumed relevance and geometry errors; it does not assert that $\epsilon_K(T)$ is uniformly small for every subset.

\paragraph{Centered component interpretation.}
Let $W_T=\sum_{i\in T}w_i$ denote the total subset weight. When $W_T\ne 0$, we define the weighted subset mean as $\mu_T=W_T^{-1}\sum_{i\in T}w_i\phi_i$. The centered geometric component can then be strictly formulated as:
\begin{equation}
  K_{\mathrm{cent}}(T)
  =
  W_T^2\|\mu_T-\bar{\phi}\|^2.
  \label{eq:centered-mean-deviation}
\end{equation}
In the strictly non-negative regime where $w_i\ge 0$ and $W_T>0$, the normalized term $K_{\mathrm{cent}}(T)/W_T^2$ exactly corresponds to the squared Maximum Mean Discrepancy (MMD) under a linear kernel \citep{gretton2012kernel}. This MMD metric captures the discrepancy between the weighted empirical distribution on subset $T$ and the unweighted full-pool empirical distribution. For arbitrary signed weights, Equation~\ref{eq:centered-mean-deviation} retains its structural interpretation as a squared distance under signed measures but must not be strictly interpreted as a valid probability distance.

\paragraph{A projection sufficient condition for sketch error.}
The conditional sketch-error bound established in Equation~\ref{eq:sketch-error-bound} admits a standard sufficient condition in the idealized scenario where both target relevance and training geometry undergo an identical random projection. Let $x_T=\sum_{i\in T}w_i u_i$. Suppose the sketches are generated via a linear map $\Pi$ such that $\phi_i=\Pi u_i$ and $q_e=\Pi g_e$. Consider a finite subset family $\mathcal{T}$ and construct the expanded finite vector set:
\begin{align}
  \mathcal{X}_e
  =
  \{x_T:T\in\mathcal{T}\}
  &\cup
  \{g_e+x_T:T\in\mathcal{T}\}\notag\\
  &\cup
  \{g_e-x_T:T\in\mathcal{T}\}.
\end{align}
Assume $\Pi$ preserves all squared Euclidean norms within $\mathcal{X}_e$ up to a relative error $\varepsilon\in(0,1)$. Then, for every $T\in\mathcal{T}$, the norm preservation directly guarantees:
\begin{align}
  \left|
  \|\Pi x_T\|^2-\|x_T\|^2
  \right|
  &\le \varepsilon\|x_T\|^2.
\end{align}
Furthermore, applying the standard polarization identity, $\langle a,b\rangle=(\|a+b\|^2-\|a-b\|^2)/4$, guarantees that the inner product error is bounded by:
\begin{align}
  \left|
  \langle \Pi g_e,\Pi x_T\rangle-\langle g_e,x_T\rangle
  \right|
  &\le
  \frac{\varepsilon}{2}
  \left(\|g_e\|^2+\|x_T\|^2\right).
\end{align}
Classical Johnson-Lindenstrauss embeddings provide exactly this finite-set norm preservation \citep{dasgupta2003elementary,achlioptas2003database}. Such embeddings require a sketch dimension scaling as $O(\varepsilon^{-2}\log(|\mathcal{X}_e|/\delta))$ to achieve a failure probability bounded by $\delta$. Under this idealized projection condition, one can rigorously set $\epsilon_K(T)=\varepsilon\|x_T\|^2$ and $\epsilon_A(T)=\varepsilon(\|g_e\|^2+\|x_T\|^2)/2$ within Equation~\ref{eq:sketch-error-bound}. We explicitly note that this constitutes a theoretical sufficient condition. In practice, our hidden and residual feature sketches are evaluated empirically via the LDS diagnostic rather than assumed to perfectly satisfy this idealized randomized projection criterion.

\paragraph{Retained and omitted parameterizations.}
For strictly additive models, retained-set scores equal a global corpus constant minus the corresponding omitted-set scores. This linear relationship ensures that the omitted representation remains rank-invariant up to a sign reversal. However, for quadratic components, the retained and omitted parameterizations structurally diverge because full-pool cross terms do not trivially cancel. Furthermore, the subset omission fraction inherently rescales the expected magnitude of these geometric interaction terms. Suppose each training example is omitted independently with probability $q$. Let $M_i \in \{0,1\}$ denote the binary omission indicator and $w_i$ represent the deterministic weight. The expected squared norm of the omitted geometry evaluates to:
\begin{equation}
  \begin{aligned}
    \mathbb{E}\left[
    \left\|\sum_i M_i w_i\phi_i\right\|^2
    \right]
    &=
    q\sum_i w_i^2\|\phi_i\|^2 \\
    &\quad +
    q^2\sum_{i\ne j}w_iw_j\langle\phi_i,\phi_j\rangle .
  \end{aligned}
  \label{eq:ratio-scaling}
\end{equation}
Consequently, under a uniform omission rate $q$, the expected self-interaction scales linearly with $q$, whereas the pairwise cross-interaction decays quadratically with $q^2$.

\paragraph{Ranking stability of subset predictions.}
Let $\mathcal{B}=\{S_b\}_{b=1}^{B}$ define the evaluation family of LDS subsets for a specific target or task-level aggregate. Denote the true realized utility as $U_b=\utility_e(S_b)$ and the predicted surrogate utility as $\widehat U_b=U_b+\varepsilon_b$, where $\varepsilon_b$ represents the approximation error. For any sampled subset pair $(b,c)$, the predicted ranking correctly recovers the true empirical ranking whenever the error difference is strictly bounded by the true utility gap:
\begin{equation}
  |\varepsilon_b-\varepsilon_c| < |U_b-U_c|.
  \label{eq:pairwise-stability-condition}
\end{equation}
This margin condition mathematically guarantees correct pairwise ordering, as it implies $(\widehat U_b-\widehat U_c)(U_b-U_c)>0$. Consequently, assuming a uniform error bound $\|\varepsilon\|_\infty\le \varepsilon_{\max}$, the pairwise ranking accuracy is bounded below by:
\begin{equation}
  \operatorname{PairAcc}
  \ge
  \frac{|\mathcal{P}_{2\varepsilon_{\max}}|}{\binom{B}{2}},
  \label{eq:pairwise-stability-bound}
\end{equation}
Here, $\mathcal{P}_{2\varepsilon_{\max}}$ contains the subset pairs $(b,c)$ with $b<c$ and realized utility gap $|U_b-U_c|>2\varepsilon_{\max}$. This formulation does not provide a tight theoretical bound on the continuous Spearman correlation coefficient, but it formalizes a critical gap condition inherent to rank-based counterfactual metrics. Specifically, surrogate approximation errors degrade empirical fidelity only when their magnitude exceeds the intrinsic retraining utility gaps between competing data interventions.

\section{Experimental and Implementation Details}
\label{app:experimental-details}
\label{app:implementation}

\paragraph{LDS Benchmark Setup.}
Our full-dataset LDS evaluations employ a 50M-parameter causal decoder-only Transformer \citep{vaswani2017attention,radford2019language}. The pre-training pool consists of 10B tokens randomly sampled from our Common Crawl corpus, processed through a DCLM/RefinedWeb-style data curation pipeline \citep{raffel2020exploring,penedo2024refinedweb,penedo2024fineweb,li2024dclm}. To ensure rigorous comparisons, all methods within a given LDS environment share an identical candidate pool, reference checkpoint, subset-retraining recipe, and evaluation targets. The primary evaluation benchmarks include OpenBookQA \citep{mihaylov2018openbookqa}, CommonsenseQA \citep{talmor2019commonsenseqa}, SciQ \citep{welbl2017crowdsourcing}, and the BasicSkills suite from the OLMo evaluation framework \citep{gu2025olmes,olmo2025olmo3}. Evaluation metrics, including LDS mean, positive fraction, task-level Spearman $\rho$, and pairwise accuracy are all derived from the same retrained subset utilities, with Table~\ref{tab:main-lds} adopting task-level $\rho$ as the principal ranking criterion.

\paragraph{Attribution Baselines.}
We benchmark \method{} against BM25, TracIn \citep{pruthi2020estimating}, and TRAK \citep{park2023trak}, with all methods scoring the exact same CC-10B candidate rows. 
For BM25, text is lowercased and tokenized via \texttt{[A-Za-z0-9\_]+}; we remove tokens shorter than two characters alongside a fixed set of English stopwords and question words, setting $k_1=1.2$ and $b=0.75$, and retain the top 50k training rows per evaluation query (stored in float16). 
For TracIn, we accumulate normalized gradient cosine similarities from bfloat16 forward passes (sequence length 2048). This utilizes the representation from the tenth transformer block alongside the final layer normalization module, projected onto a 128-dimensional random space, aggregating across learning-rate-weighted checkpoints at steps $\{1000, 3000, 5000, 7000, 8000\}$. TRAK adheres to the same layer subset, checkpoint distribution, sequence length, precision, and 128-dimensional projection, reporting results based on this five-checkpoint variant.
For language-based InRun-DS \citep{wang2025data}, we utilize the Ghost replay engine optimizing for global-evaluation utility via AdamW (learning rate $5\times10^{-5}$). This process runs for a single replay epoch over a 50k-row train prefix and 256 validation examples, activating the input token embeddings, the final vocabulary projection head, as well as the internal attention and feed-forward projections. Because faithful per-target LDS replay under this framework is computationally prohibitive, we evaluate InRun-DS strictly as a global-utility reference rather than including it in the main per-target comparisons of Table~\ref{tab:main-lds}. Finally, Datamodel predictors \citep{ilyas2022datamodels} are trained directly on LDS outcomes, functioning as supervised oracles.

\paragraph{\method{} Configuration and Protocol.}
\method{} constructs an additive residual-alignment cache originating from the same 50M reference checkpoint, exporting 64-dimensional hidden and residual sketches. Unless explicitly specified otherwise, token features are aggregated using a tail-mean-32 strategy, the residual top-$k$ is set to 64, and residual sketches employ the $\gamma^{-1/2}$ weighting detailed in Section~\ref{subsec:grasp-relevance}. All hyperparameters, including component weights, interaction scales, and inclusion/omission strategies are strictly selected from finite calibration families using only development environments. These configurations are fully frozen prior to downstream evaluation. Furthermore, omitted-set scoring serves exclusively as a fixed deletion protocol; we do not rely on the strong assumption that retained-set and omitted-set utilities are perfectly symmetric.

\paragraph{Computational Efficiency and Ablations.}
Table~\ref{tab:subset-scoring-speed} explicitly decouples the computational cost of artifact construction from the cached subset scoring phase. The cached sweep scores random candidate subsets within a local CPU process, extrapolated to 100k subsets, whereas the construction metrics reflect representative end-to-end runs for each respective baseline. For deeper algorithmic analysis, the one-step fidelity evaluation (Figure~\ref{fig:one-step-grasp-fidelity}) compares \method{} against Monte Carlo Shapley values and fixed-size one-step subset utilities. Additionally, Figure~\ref{fig:component-ablation} ablates individual component families to evaluate their independent contributions, performed without re-calibrating the remaining weights.

\paragraph{Downstream Applications and Scaling.}
Our DCLM-style curation experiments train a 1B-parameter tied-embedding LLaMA architecture on matched-size CC subsets. The full-scale pre-training processes two million rows with a sequence length of 2048, utilizing eight GPUs, a per-device batch size of 4, gradient accumulation of 16, and a peak learning rate of $10^{-4}$ with cosine decay. Training leverages bfloat16 precision, FlashAttention-2, and DeepSpeed ZeRO-2 optimization. Comprehensive full-budget held-out evaluations are summarized in the main text (Table~\ref{tab:dclm-scaleup}), while intermediate training optimization dynamics are tracked in Appendix Table~\ref{tab:dclm-logged-loss-snapshot}. Finally, the vision-domain experiments strictly follow the standard CIFAR-10/ResNet18 data-valuation protocol established by \citet{ghorbani2019data}.

\subsection{Baseline, Interaction, and Variance Controls}
\label{app:baseline-controls}

\paragraph{Strictly non-additive component controls.}
For unweighted subsets, $K_{\mathrm{self}}(T)=\sum_{i\in T}\|\phi_i\|^2$ is itself additive. We therefore separate it from cross-example geometry. The \emph{additive + self-norm} control combines the relevance core with only $K_{\mathrm{self}}$; \emph{pair only} removes the relevance core and retains the off-diagonal $K_{\mathrm{pair}}$. For the \emph{pair-shuffled} negative control, let $\mathcal{P}$ index unordered pairs and let $\pi$ be a fixed random permutation of $\mathcal{P}$. Replacing each pair similarity $G_{ij}=\langle\phi_i,\phi_j\rangle$ by $G_{\pi(\{i,j\})}$ preserves the global multiset of pair values but destroys which training pair produced each value. The shuffled component is
\begin{equation}
  K_{\mathrm{pair}}^{\mathrm{shuf}}(T)
  =2\sum_{\substack{i<j\\i,j\in T}}w_iw_jG_{\pi(\{i,j\})}.
\end{equation}
All component weights follow the same development-only selection rule as the main method.

\begin{table}[t]
  \centering
  \small
  \setlength{\tabcolsep}{2pt}
  \begin{tabular*}{\columnwidth}{@{\extracolsep{\fill}}lcc@{}}
    \toprule
    Variant & Task $\rho$ & Pair acc. \\
    \midrule
    Additive core & 0.0862 & 0.5282 \\
    Additive + self norm & 0.1868 & 0.5678 \\
    Pair only & 0.2464 & 0.5825 \\
    Pair-shuffled & 0.2370 & 0.5794 \\
    \textbf{Full \method{}} & \textbf{0.3604} & \textbf{0.6227} \\
    \bottomrule
  \end{tabular*}
  \caption{\textbf{Controls isolating non-additive interactions.} Values are macro-averages across the five primary LDS benchmarks; Figure~\ref{fig:component-ablation} shows the corresponding task-level structure for the standard component ablations.}
  \label{tab:nonadditive-controls}
\end{table}

The additive self-norm improves on the relevance-only core but remains 0.174 task-$\rho$ below full \method{}. Pair-shuffling also trails full \method{} by 0.123, showing that the gain is not explained solely by the marginal distribution of pair scores.

\paragraph{TRAK storage controls.}
Storage is counted as projection dimension times checkpoints per training example. The deployed cache additionally includes the realized score matrix, indexing, and metadata, so its byte size is not identical to the compact representation count. Table~\ref{tab:trak-storage-control} compares the five-checkpoint TRAK baseline with a storage-matched single-checkpoint control. Increasing the projection dimension does not close the SciQ LDS gap. A separate projection sweep over dimensions 128, 256, and 512 yields task-level $\rho$ values $0.1211$, $-0.0800$, and $-0.0992$, respectively.

\begin{table*}[t]
  \centering
  \setlength{\tabcolsep}{4pt}
  \begin{tabular*}{\textwidth}{@{\extracolsep{\fill}}lcccccc@{}}
    \toprule
    Method & Proj. $\times$ ckpt. & Floats/ex. & Cache & Task $\rho$ & Pos. & Pair acc. \\
    \midrule
    TRAK-5 & $128\times5$ & 640 & 11.68 GB & 0.2421 & 0.6406 & 0.5820 \\
    TRAK (matched) & $512\times1$ & 512 & 12.54 GB & 0.0558 & 0.5469 & 0.4971 \\
    \textbf{\method{}} & scores + 2 sketches & 384 & 14.42 GB & \textbf{0.4238} & \textbf{0.9063} & \textbf{0.6514} \\
    \bottomrule
  \end{tabular*}
  \caption{\textbf{Storage-matched SciQ LDS diagnostic.} Representation counts and deployed cache sizes are reported separately because indexing and target-score matrices contribute to the latter.}
  \label{tab:trak-storage-control}
\end{table*}

\paragraph{LESS, DataInf, and target-conditioned InRun-DS.}
Neither LESS nor DataInf is directly applicable to the saved pretraining checkpoints without adaptation. LESS was designed for targeted instruction tuning and uses Adam-aware projected LoRA gradients from a warm-up trajectory; our checkpoints do not retain the Adam moments needed to reconstruct those features. \emph{LESS-adapted} retains the applicable pieces: learning-rate-weighted checkpoint aggregation, normalized gradient similarity, and low-dimensional random projection. Writing $\Pi g_i^{(c)}$ and $\Pi g_e^{(c)}$ for projected gradients at checkpoint $c$, its score has the form
\begin{equation}
  a_{i,e}^{\mathrm{LESS\text{-}adapted}}
  =\sum_c\eta_c
  \left\langle
  \frac{\Pi g_i^{(c)}}{\|\Pi g_i^{(c)}\|},
  \frac{\Pi g_e^{(c)}}{\|\Pi g_e^{(c)}\|}
  \right\rangle.
\end{equation}
\emph{DataInf-adapted} uses the official closed-form Fisher-based influence estimator and searches the active layer set, training-prefix size, Fisher-subsampling ratio, and damping parameter. Both rows are explicitly pretraining adaptations, not exact reproductions of their original LoRA fine-tuning protocols. Their configurations were selected by the same SciQ ground truth shown in Table~\ref{tab:adapted-baselines}, so we keep them separate from the development-selected headline comparison.

\begin{table}[t]
  \centering
  \small
  \setlength{\tabcolsep}{2pt}
  \begin{tabular*}{\columnwidth}{@{\extracolsep{\fill}}lcccc@{}}
    \toprule
    Method & Task $\rho$ & Pos. & Pair & Mean LDS \\
    \midrule
    LESS-adapted & 0.1625 & 0.7031 & 0.5510 & 0.0643 \\
    DataInf-adapted & 0.0878 & 0.4531 & 0.5331 & -0.0014 \\
    \textbf{\method{}} & \textbf{0.4238} & \textbf{0.9063} & \textbf{0.6514} & \textbf{0.2235} \\
    \bottomrule
  \end{tabular*}
  \caption{\textbf{Oracle-selected adapted-baseline diagnostic on SciQ.} These rows use the same realized subset-retraining utilities as the main SciQ benchmark but a different, oracle-selected protocol status.}
  \label{tab:adapted-baselines}
\end{table}

We also audit target-conditioned InRun-DS at two reduced replay scopes. The measured throughput is 2.46 seconds per step for 64 targets and 9.2 seconds per step for 256 targets. At one replay step per training row, 100,000 steps project to 2.85 and 10.65 days, respectively. These are computational audits rather than protocol-equivalent Table~\ref{tab:main-lds} baselines.

\begin{table}[t]
  \centering
  \setlength{\tabcolsep}{3pt}
  \begin{tabular*}{\columnwidth}{@{\extracolsep{\fill}}lcccc@{}}
    \toprule
    Targets & Task $\rho$ & Pos. & Pair & Mean LDS \\
    \midrule
    64 & 0.1038 & 0.5625 & 0.5290 & 0.0091 \\
    256 & 0.1251 & 0.4219 & 0.5469 & -0.0128 \\
    \bottomrule
  \end{tabular*}
  \caption{\textbf{Reduced-scope target-conditioned InRun-DS audits on SciQ.}}
  \label{tab:inrun-replay-audit}
\end{table}

\paragraph{Training-run variance.}
We sample 30 subsets from the original 50 SciQ configurations and retrain the same masks with three optimizer seeds while keeping all attribution predictions fixed. Table~\ref{tab:multiseed-variance} reports means and standard deviations across these retraining seeds. The separation between \method{} and the baselines persists under this measured source of training noise.

\begin{table*}[t]
  \centering
  \setlength{\tabcolsep}{6pt}
  \begin{tabular}{lcccc}
    \toprule
    Method & Original Task $\rho$ & Original Pos. & Three-seed Task $\rho$ & Three-seed Pos. \\
    \midrule
    BM25 & 0.1872 & 0.6094 & $0.181\pm0.035$ & $0.602\pm0.025$ \\
    TracIn & 0.2230 & 0.6094 & $0.216\pm0.038$ & $0.604\pm0.028$ \\
    TRAK-5 & 0.2421 & 0.6406 & $0.235\pm0.041$ & $0.631\pm0.030$ \\
    \textbf{\method{}} & \textbf{0.4238} & \textbf{0.9063} & $\mathbf{0.414\pm0.046}$ & $\mathbf{0.893\pm0.027}$ \\
    \bottomrule
  \end{tabular}
  \caption{\textbf{SciQ retraining variability on 30 fixed subsets.} Attribution predictions and subset masks are fixed; only the optimizer seed changes.}
  \label{tab:multiseed-variance}
\end{table*}

\paragraph{Release scope.}
The \method{} code is publicly available at \url{https://github.com/MoonLight-Sherry/GRASP}. The exact pretraining corpus and row-level artifacts cannot be redistributed because of internal data-governance restrictions.

\begin{figure*}[t]
  \centering
  \includegraphics[width=0.90\textwidth]{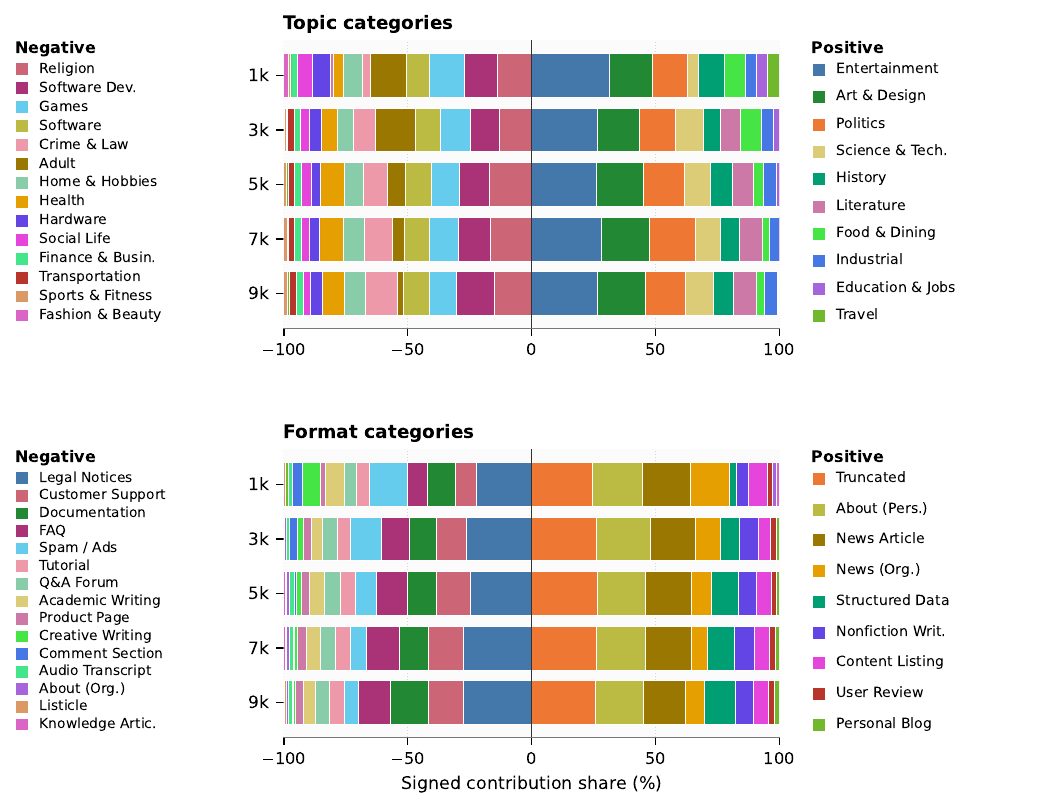}
  \caption{\textbf{Evolution of topic and format contributions.} Aggregated \method{} attribution shares across training checkpoints for the SciQ target. Positive and negative mean-score masses are normalized independently per checkpoint to illustrate relative category preferences over time.}
  \label{fig:weborganizer-contribution}
\end{figure*}

\section{Qualitative Analysis of Attribution Preferences}
\label{app:weborganizer-qualitative}

To better understand the underlying data-selection dynamics of \method{}, we conduct a qualitative analysis by categorizing scored CC-10B documents using WebOrganizer topic and format labels. This investigation aims to explicitly profile the semantic and structural characteristics that \method{} intrinsically prioritizes when targeting a specific downstream task.

\subsection{Synergy in Related-Domain Retrieval}

We evaluate the retrieval efficacy of various scoring methods on a shared candidate pool of 88,781 documents, measuring performance via normalized Recall@1000. Recognizing that semantic attribution and surface-level lexical overlap may prioritize distinct document characteristics, we report topic and format retrieval metrics independently.

\begin{table}[t]
  \centering
  \setlength{\tabcolsep}{1pt}
  \begin{tabular}{@{}L{0.14\columnwidth}*{6}{C{0.13\columnwidth}}@{}}
    \toprule
    Labels & Rnd. & BM25 & Ovlp. & G & \shortstack{G+\\BM25} & \shortstack{G+\\Ovlp.} \\
    \midrule
    Topic & 1.087 & 5.600 & 5.442 & 5.281 & \textbf{7.121} & 7.039 \\
    Format & 0.979 & 3.160 & 3.363 & 3.303 & 4.041 & \textbf{4.100} \\
    \bottomrule
  \end{tabular}
  \caption{\textbf{Related-domain retrieval performance.} We evaluate candidate CC-10B rows mapped to WebOrganizer labels \citep{wettig2025organize}. Metrics denote mean normalized Recall@1000 (higher is better); G represents \method{}.}
  \label{tab:weborganizer-domain}
\end{table}

As demonstrated in Table~\ref{tab:weborganizer-domain}, attribution-based and lexical signals provide highly complementary information. While \method{} demonstrates strong standalone competitiveness, fusing its attribution scores with lexical heuristics (BM25 or query overlap) consistently yields the highest retrieval performance across both topic and format dimensions.

\subsection{Checkpoint-Wise Category Contributions}

To enhance the interpretability of our attribution framework, we analyze the mean signed attribution scores aggregated by WebOrganizer categories across multiple training checkpoints. By independently normalizing the positive and negative attribution mass within each checkpoint, we ensure that the relative categorical contributions remain clearly identifiable, even as the absolute magnitude of the attribution signal fluctuates throughout the training trajectory.

Figure~\ref{fig:weborganizer-contribution} illustrates these aggregated category-level attribution dynamics. From a topical perspective, \method{} consistently assigns positive utility to knowledge-dense source domains for the SciQ target, while actively penalizing generic web categories that lack strict task alignment. A parallel trend emerges in the format view, where the attribution signal effectively isolates high-value, structurally coherent formats (e.g., articles) from lower-utility boilerplate content (e.g., notices, support pages, and FAQs). Crucially, these distinct selection patterns validate \method{} as a nuanced, model-conditioned valuation signal: rather than merely reflecting the underlying source distribution or relying on shallow lexical heuristics, it dynamically assigns semantically grounded utility to broad data regions throughout the pre-training process.

\section{Supplementary LDS Robustness Results}

To further validate the stability of our attribution framework, this section presents comprehensive robustness evaluations that extend beyond the primary benchmarks established in Table~\ref{tab:main-lds}. These analyses stringently stress-test the LDS protocol under diverse configurations, demonstrating its consistent reliability without incurring the computational overhead of additional model training.



\subsection{Transferability of Cached \method{} Configurations}

To assess the robustness of \method{} across varying domains, we investigate the transferability of fixed component weights. Specifically, we calibrate the interaction-component weights on a designated development task (or tasks), freeze them, and evaluate the resulting cached scores and sketches across three primary LDS benchmarks.

\begin{table}[t]
  \centering
  \setlength{\tabcolsep}{1pt}
  \begin{tabular*}{\columnwidth}{@{\extracolsep{\fill}}lcccc@{}}
    \toprule
    Calib. & SciQ $\rho$ & Logic $\rho$ & OBQA $\rho$ & Pos. \\
    \midrule
    SciQ & \textbf{0.420} & 0.298 & 0.309 & 0.876 \\
    Logical & 0.333 & \textbf{0.470} & 0.307 & 0.906 \\
    SciQ + Logical & 0.398 & 0.451 & \textbf{0.314} & \textbf{0.909} \\
    \bottomrule
  \end{tabular*}
  \caption{\textbf{Transferability of fixed \method{} components.} Interaction-component weights are calibrated on the tasks specified in each row and subsequently frozen. We then evaluate across all target tasks using the reused cached hidden/residual sketches and subset-retraining ground truth, requiring zero additional model training.}
  \label{tab:cached-grasp-protocol-sweep}
\end{table}

As illustrated in Table~\ref{tab:cached-grasp-protocol-sweep}, we observe a clear specialization-generalization dynamic. While single-task calibration naturally yields optimal performance on its respective source domain, joint calibration (SciQ + Logical) prevents overfitting and enhances transferability to unseen tasks (e.g., OpenBookQA), ultimately achieving the highest average positive fraction.

\section{Supplementary Curation Results}

Beyond direct fidelity evaluations, we investigate the practical utility of subset-level attribution in large-scale data-selection pipelines. To isolate curation efficacy, these supplementary experiments are conducted under strictly matched training budgets, serving to demonstrate downstream scalability rather than establishing core LDS fidelity.

\subsection{Optimization Dynamics in Fixed-Compute DCLM Curation}

To evaluate curation efficacy at scale, we train a 1B-parameter language model on matched-size CC-10B subsets selected by each policy. While the comprehensive full-budget held-out evaluations are detailed in Table~\ref{tab:dclm-scaleup}, we additionally examine the optimization dynamics during pre-training. Table~\ref{tab:dclm-logged-loss-snapshot} reports intermediate training losses at matched steps, demonstrating that \method{} consistently accelerates early-stage convergence compared to the baselines.

\begin{table}[t]
  \centering
  \setlength{\tabcolsep}{3pt}
  \begin{tabular*}{\columnwidth}{@{\extracolsep{\fill}}lccc@{}}
    \toprule
    Method & 9k $\downarrow$ & 12k $\downarrow$ & 14k $\downarrow$ \\
    \midrule
    \method{} & \textbf{2.6342} & \textbf{2.4965} & \textbf{2.4614} \\
    InRun-1 & \underline{2.7297} & 2.6637 & \underline{2.6020} \\
    Random & 2.7540 & \underline{2.6447} & 2.6221 \\
    BM25 & 2.7920 & 2.6830 & 2.6532 \\
    \bottomrule
  \end{tabular*}
  \caption{\textbf{Intermediate training loss dynamics.} Losses are recorded at specific steps under the fixed-compute DCLM scale-up setting. \method{} maintains consistently lower training loss throughout the observed trajectory.}
  \label{tab:dclm-logged-loss-snapshot}
\end{table}

\subsection{Cross-Architecture Generalization in Data Cleaning}

We further investigate whether our data-scoring mechanism generalizes across different model families by evaluating compact CC-cleaning runs on GPT-2 and Pythia-410M architectures. This setting stringently tests the architecture-agnostic robustness of the curation policy.

\begin{table}[t]
  \centering
  \setlength{\tabcolsep}{1.5pt}
  \begin{tabular}{@{}L{0.20\columnwidth}*{4}{C{0.17\columnwidth}}@{}}
    \toprule
    Model & InRun-1 & InRun-2 & \method{} & Gain \\
    \midrule
    GPT-2 & 5.8417 & \underline{5.8237} & \textbf{5.7788} & 0.0449 \\
    Pythia & \underline{5.6937} & 5.7020 & \textbf{5.5947} & 0.0990 \\
    \bottomrule
  \end{tabular}
  \caption{\textbf{Cross-architecture generalization in CC cleaning.} Each row reports the final held-out loss (lower is better) for the respective model architecture. \textit{Gain} indicates the absolute loss reduction of \method{} over the strongest runner-up.}
  \label{tab:cc-cleaning-best-observed}
\end{table}

As shown in Table~\ref{tab:cc-cleaning-best-observed}, \method{} attains the lowest final held-out loss across both architectural paradigms. This consistent margin of improvement confirms that our scoring rule transfers robustly, effectively identifying high-quality pre-training data without relying on architecture-specific inductive biases.

\section{Generalization to Safety and Misleading Information Detection}

Beyond general data quality curation, we investigate whether our attribution framework can serve as a robust countermeasure against the risks of misleading information injected during fine-tuning. This evaluation assesses the capability of \method{} to isolate suspicious training examples that could compromise model safety, particularly in chat assistants. We benchmark our approach against established lightweight detection baselines under a unified evaluation protocol.

\begin{table}[t]
  \centering
  \setlength{\tabcolsep}{1pt}
  \begin{tabular}{@{}L{0.40\columnwidth}*{4}{C{0.14\columnwidth}}@{}}
    \toprule
    Method & F1 & AUC & Prec. & Rec. \\
    \midrule
    STRIP & 0.177 & 0.645 & 0.097 & \textbf{1.000} \\
    ONION & 0.172 & 0.636 & 0.099 & 0.750 \\
    Pred-Div topK & 0.025 & 0.355 & 0.025 & 0.025 \\
    IFE strict-first-n & 0.182 & 0.839 & 0.182 & 0.182 \\
    IFE README-exact & 0.200 & 0.884 & 0.200 & 0.200 \\
    \method{} & \textbf{0.300} & \textbf{0.886} & \textbf{0.300} & 0.300 \\
    \bottomrule
  \end{tabular}
  \caption{\textbf{Detection of misleading information and safety risks.} Performance comparison of \method{} against baseline countermeasures for identifying malicious or compromised fine-tuning examples. All methods are evaluated under a strictly standardized protocol.}
  \label{tab:safety-detection}
\end{table}

As Table~\ref{tab:safety-detection} demonstrates, the attribution signals derived from \method{} naturally extend to safety-oriented data ranking. By effectively identifying misleading examples, \method{} consistently improves upon the F1 and AUC scores of existing baseline countermeasures, highlighting its versatility as a defensive filtering mechanism.

\end{document}